\newcommand{\blind}{1}
\newtheorem{thm}{Theorem}
\newtheorem{cor}{Corollary}
\newtheorem{prop}{Proposition}
\theoremstyle{lemma}
\newtheorem{Lemma}{Lemma}
\theoremstyle{remark}
\newtheorem{remark}{\bf Remark}
\theoremstyle{example}
\theoremstyle{claim}
\newtheorem{claim}{Claim}
\theoremstyle{assumption}
\newtheorem{Assumption}{Assumption}
\newtheorem{condition}{Condition}
\newtheorem{Definition}{Definition}
\DeclareMathOperator*{\argmax}{arg\,max\;}
\def\R{{\mathbb R}}
\def\Z{{\mathbb Z}}
\def\N{{\mathbb N}}
\def\E{{\mathbb E}}
\def\R{{\mathbb R}}
\def\P{{\mathbb P}}
\def\b{{\mathbf b}}
\def\a{{\mathbf a}}
\def\v{{\bm{v}}}
\def\EE{{\mathcal{E}}}
\def\u{{\bm{u}}}
\def\w{{\bm{w}}}
\def\deg{{\text{deg}}}
\def\a{{\alpha}}
\def\b{{\beta}}
\DeclarePairedDelimiter\abs{\lvert}{\rvert}%
\DeclarePairedDelimiter\norm{\lVert}{\rVert}%
\let\oldabs\abs
\def\abs{\@ifstar{\oldabs}{\oldabs*}}
\let\oldnorm\norm
\def\norm{\@ifstar{\oldnorm}{\oldnorm*}}
\newcommand{\Ber}{\text{Ber}}
\newcommand{\bu}{\boldsymbol{u}}
\newcommand{\vol}{\text{vol}}
\definecolor{brickred}{rgb}{0.50, 0.22, 0.10}
\begin{document}

\def\spacingset#1{\renewcommand{\baselinestretch}%
	{#1}\small\normalsize} \spacingset{1}


\if1\blind
{
\title{\bf  A General Pairwise Comparison Model for Extremely Sparse Networks}
\author{Ruijian $\rm{Han}^{a\dag}$\thanks{Corresponding author. Email: ruijianhan@cuhk.edu.hk. Address: Department of Statistics, The Chinese University of Hong Kong, Shatin, N.T., Hong Kong, China},\hspace{0.2cm}
	Yiming $\rm{Xu}^b$\thanks{Equal Contribution.},\hspace{0.2cm}
	and
	Kani $\rm{Chen}^c$\\
	\\
{\small {\small {$\it^{a}$ Department of Statistics, The Chinese University of Hong Kong, Hong Kong, China} }}\\
{\small {\small {$\it^{b}$  Department of Mathematics, University of Utah, Salt Lake City, USA } }}\\
{\small {\small {$\it^{c}$  Department of Mathematics,  } }}\\
{\small {\small  Hong Kong University of Science and Technology, Hong Kong, China }} }
}

\date{}
\maketitle
\fi

\if0\blind
{
	\bigskip
	\bigskip
	\bigskip
	\begin{center}
		{\LARGE\bf  A General Pairwise Comparison Model for Extremely Sparse Networks}
	\end{center}
	\medskip
} \fi

\bigskip
\begin{abstract}
Statistical inference using pairwise comparison data is an effective approach to analyzing large-scale sparse networks. 
In this paper, we propose a general framework to model the mutual interactions in a network, which enjoys ample flexibility in terms of model parametrization. Under this setup, we show that the maximum likelihood estimator for the latent score vector of the subjects is uniformly consistent under a near-minimal condition on network sparsity. 
This condition is sharp in terms of the leading order asymptotics describing the sparsity. Our analysis utilizes a novel chaining technique and illustrates an important connection between graph topology and model consistency. Our results guarantee that the maximum likelihood estimator is justified for estimation in large-scale pairwise comparison networks where data are asymptotically deficient. Simulation studies are provided in support of our theoretical findings. 
\end{abstract}

\noindent%
{\it Keywords:}  Entry-wise error, Graph topology, Maximum likelihood estimation,  Sparsity, Uniform consistency.


\vfill

\newpage
\spacingset{1.1} 

\section{Introduction}

Pairwise comparison data arise frequently in network data analysis and assist people in finding vital information underlying many modern interaction systems such as social webs and sports tournaments. 
For example, match points can be used to evaluate the team's strengths in sports competitions. Specifically,
consider $[n]=\{1, \cdots, n\}$ as $n$ subjects in a network of interest, and assume that every $i \in [n]$ is assigned a latent score $u_i$, and $\u=(u_i)^T_{i\in [n]}$ denotes the corresponding score vector. 
Given pairwise comparison data, defined as a set of independent random variables $\{X_{ij}\}_{1\leq i<j\leq n}$, one wishes to accrue knowledge on $\u$ via statistical inference processes. In the case of team ranking,  $\u$ is a vector measuring the strength of $n$ teams and $X_{ij}$ is the competition outcome between teams $i$ and $j$, which is a binary random variable (denoting win and loss) depending on $u_i$ and $u_j$. Whereas in other scenarios such as online assessment, $X_{ij}$ represents the average rating of subject $i$ against subject $j$. Under such circumstances, a continuous spectrum of the rating outcome is more appropriate. 

Mutual comparison data from different sources may take different forms and enhance our understanding of the model as long as it is relevant to $\u$, and this is usually manifested in assumptions on the link function between $\u$ and $\{X_{ij}\}_{1\leq i<j\leq n}$. 
Statistical methods can then be applied to estimate $\u$. 
The framework described here provides a simplified characterization of many parametric pairwise comparison models in the literature.

The study of parametric pairwise comparison models emerged in the early 20th century and gradually gained popularity. 
Among them, the Bradley-Terry (BT) model, first introduced in \cite{MR0070925}, attracted much attention.
The BT model is a specification of the team ranking example introduced before:
\begin{equation*}
\P\left(\text{team $i$ beats team $j$}\right)=\P\left(X_{ij}=1\right)= \Phi(u_i-u_j),
\end{equation*} 
where $\Phi(\cdot)$ is the logistic link function defined as $\Phi(x)=(1+e^{-x})^{-1}$. In practice, not every pair of subjects admits a comparison, and
even if it does,  not all comparison data can be observed. 
To accommodate such sampling scenarios, the comparison graph is often assumed as an incomplete general graph;
a comparison outcome between two subjects is observed if there is an edge between them. 
A common approach to modeling the comparison graph structure is through a generalized random graph.
In particular, each pair $(i, j)$ is associated with an independent Bernoulli random variable $n_{ij}\sim \Ber(p_{ij,n})$
to determine the availability of $X_{ij}$: $n_{ij}=1$ if $X_{ij}$ is observed and $n_{ij}=0$ otherwise. 
Comparison rates $p_{ij,n}$ are constants and measure the (edge) density of a network. 
{ When $p_{ij,n}$ is independent of $i$ and $j$, namely $ p_{ij,n} = p_n$, the comparison graph is homogeneous and is called the Erd\H{o}s-R\'enyi graph $G(n, p_n)$.}
A detailed discussion of the BT models with random graph structure can be found in \cite{MR947340}. 

Despite its simplicity, the BT model fails to capture the truth when $X_{ij}$ is not binary.  
The specific choice of $\Phi(x)$ also limits the use of the BT model in practice. 
To generalize, numerous variants of the BT model have been invented either by considering Likert-scale response or by replacing $\Phi(x)$ with a different link function \citep{thurstone1927law,MR217963,davidson1970extending,MR1064798}.   
These choices are usually case-dependent and a unified treatment is yet to be found, which gives the motivation for the current paper. Before going further to explain how to generalize the BT model as well as developing a consistent estimation theory under the generalized framework, we recall a few existing inference results in the BT model.


A natural estimator for $\u$ is the maximum likelihood estimator (MLE), which is denoted by $\widehat{{\u}}$ and will be the main focus in this paper. 
{The consistency of the MLE in the BT model has been well studied when the comparison graph is the Erd\H{o}s-R\'enyi graph $G(n, p_n)$}. For instance, it was established in \cite{MR1724040,MR2987494} that the MLE is consistent in the $\ell_\infty$ norm, also called the uniformly consistent, if $\liminf_{n\to \infty}p_n>0$. Consequently, for a network of $n$ subjects, at least $cn^2$ ($c>0$) samples are needed to ensure the convergence of the MLE based on their results. The quadratic requirement on samples can be restrictive even when $n$ is only moderately large. 
In fact, many large-scale networks arising from realistic applications are sparse, that is, the degree of most subjects is sublinear in the size of the network.

To address the issue on sparsity, some researchers studied consistency of the MLE under a weaker condition on $p_n$ but in a different metric; see \cite{maystre2015fast, MR3613103}. They showed that $\|\widehat{\u}-\u\|_2$ cannot grow faster than $\sqrt{n(\log n)^{-\kappa}}$ if $p_n\geq n^{-1}(\log n)^{\kappa +1}$ ($\kappa>0$). 
{A similar result for the $\ell_2$ norm weighted by the graph Laplacian is obtained in \cite{MR3504618} without requiring homogeneity on $p_{ij,n}$.} Although the sparsity condition in these results is optimal, the (weighted) $\ell_2$ norm only reflects the averaging behavior of the estimator, from which one cannot deduce convergence for each component unless the scores of all subjects are of the same order. 

Recently, \cite{BTL} and \cite{MR3953449} made progress by establishing the uniform consistency of the MLE and the regularized MLE in sparse homogenous BT models, respectively. 
Their analysis heavily depends on the special parametrization of the BT model as well as the homogeneity assumption on the comparison rates. 
It is also worth noting that the uniform consistency of the regularized MLE mentioned above \emph{does not} directly imply the same result for the MLE.
\cite{chen2020partial} improved the sparsity condition in \cite{BTL} and showed that the MLE is superior to the spectral method in \cite{MR3613103} in terms of the multiplicative constant factors in the sample complexity.


In this paper, we develop a uniform consistency theory of the MLE for a general class of comparison models under a near-optimal sparsity condition. Our contribution can be briefly summarized as follows: 

\begin{itemize}
	\item We build a general probabilistic framework for pairwise comparison network data analysis. 
	Our framework enjoys sufficient flexibility in terms of model parametrization, covering a wide variety of existing models in the literature such as the BT model, the Thurstone-Mosteller model \citep{thurstone1927law,mosteller2006remarks}, the Davidson model \citep{davidson1970extending} and many others.
	\item Under this framework, we identify a sufficient condition for the uniform consistency of the MLE for the latent score vector. 
	Our condition can be succinctly characterized using the graph topology and is satisfied in many random graph ensembles with varying parameters. 
	{In particular, when the sampling model is the Erd\H{o}s-R\'enyi graph $ G(n, p_n) $, we show that the comparison rate $p_n$ can be chosen as small as of order $ (\log n)^{3+\epsilon}/n$ ($\epsilon>0$) to ensure the uniform consistency of the MLE (with convergence rate at least of order $(\log n)^{-\epsilon/2}$), matching the graph connectivity threshold $\log n/n$ up to logarithmic factors.}

\end{itemize}

The uniform consistency results guarantee the entry-wise convergence of the estimator at a uniform rate. 
Our approach decouples the randomness in pairwise comparison modeling and graph sampling, demonstrating a deep connection between graph topology and consistency of the MLE. 
On the practical side, our results justify that the MLE can be used in large-scale complex network estimation even if comparison data is asymptotically deficient,
that is, the ratio of the observed comparisons and the theoretical total comparisons goes to zero (at a certain rate) as the number of subjects goes to infinity.  
In addition, numerical results are provided in strong support of our theory.     

\color{black}
The rest of this paper is organized as follows. 
Section \ref{section:setup} introduces a general framework for parametric pairwise comparison models and an inhomogeneous assumption on the comparison graph. 
Section \ref{section:main} establishes both the unique existence and uniform consistency of the MLE in the setup introduced in Section \ref{section:setup} under a near-optimal sparsity condition. 
Section \ref{section:examples} shows that uniform consistency of the MLE in many existing models can be deduced as corollaries from our results in Section \ref{section:main}.
Section \ref{Comparison graph} discusses a few graph topological conditions that will be used to formulate the uniform consistency result. 
Sections \ref{section:num} and \ref{section:discussion} are devoted to providing numerical experiments to support our theoretical findings and discussing future research directions, respectively. 
The proofs of the main results can be found in Appendices.  

 \color{black}
\subsection{Notation} 


For $m\in\N$, we write $\{1, \cdots, m\}$ as $[m]$.
For sequences $\{a_n\}_{n\in\N}$ and $\{b_n\}_{n\in\N}$, $a_n\lesssim b_n$ if there exists an absolute constant $C$ such that $a_n\leq Cb_n$ for all $n\in\N$. 
Particularly, $a_n=\Omega(b_n)$ if $a_n\lesssim b_n$ and  $b_n\lesssim a_n$. For a univariate function $ f(x) $, $ f'(x) $ and $ f''(x) $ denote the first and second derivative of $f(x)$, respectively. 
For a function with two arguments $ f(x;y) $, for $ i\in[2] $, $ f_i(x;y) $ denotes the derivative of $ f(x;y) $ with respect to the $i$-th argument. We reserve $\u$ and $\widehat{\u}$ for the true and the MLE of the latent score vector in the pairwise comparison model, respectively. 

We use $G = (V, E)$ to denote an undirected connected graph where $ V $ is the vertices set and $ E $ is the edges set. In this work, we allow the existence of multiple edges in $ E. $ 
For any $U\subset V$, its boundary edges are defined as $\partial U = \{(i,j)\in E: i\in U, j\in U^\complement\}$.
Moreover, for $U_1, U_2\subset V$, we use $\EE(U_1, U_2)$ to denote the set of cross edges between $U_1$ and $U_2$, that is, $\EE(U_1, U_2) = \{(i, j)\in E: i\in U_1, j\in U_2\}$. Specifically, $ \EE(U, U^\complement)  =  \partial U.$


\section{Problem setup}\label{section:setup}
In this section, we introduce a general framework for analyzing pairwise comparison networks.

\subsection{Pairwise comparisons}\label{pcm}
Let $ \u \in \R^n$ be the latent score vector of the subjects of interest. 
In the general framework, the comparison outcome between subjects $i, j\in [n]$ is modeled via a random variable $ X_{ij} $, with density (mass) function given by $f(x; u_i - u_j)$ for some \textit{valid} function $ f $, which is defined below. Note that the distribution of $X_{ij}$ depends only on $u_i-u_j$, which is called the \emph{relative score} between $i$ and $j$.  

\begin{Definition}\label{Pairwise comparison model}
	A function $f: A \times \mathbb{R} \to \mathbb{R^+}, $ where $A$ is a symmetric subset of $\R$ denoting the possible comparison outcomes, is said to be \textit{valid} if it satisfies the following assumptions:
	\begin{Assumption}\label{A1}
	For $y\in\R$, $\int_A f(x;y)\ dx = 1$ if $A$ is continuous, and $\sum_{x\in A}f(x; y) = 1$ if $A$ is discrete.
	\end{Assumption}
	\begin{Assumption}\label{A2}
		$f(x, y)$ is even with respect to $(x; y)$:
		\begin{align*}
		&f(x;y) = f(-x;-y)&(x, y)\in A\times\R.
		\end{align*}
	\end{Assumption}
	\begin{Assumption}\label{A3}
		For $ x < 0 $, $ f(x;y) $ is decreasing in $y$, and $f(x;y)\to 0$ as $ y \to \infty $.
	\end{Assumption}
	
	\begin{Assumption}\label{A4}
		$\sup_{y\in\R}f(x; y)<+\infty$ for every $x\in A$. 
	\end{Assumption}
\end{Definition}

Assumption \ref{A1} guarantees that $\{f(x; y)\}_{y\in\R} $ is a family of probability density (mass) functions indexed by $ y $. Assumption \ref{A2} states that $i$ beats $j$ by $x$ is the same as that $j$ beats $i$ by $-x$. Assumption \ref{A3} implies that a large relative score makes the comparison outcome more predictable. Assumption \ref{A4} is unconditionally true when $A$ is discrete, and is generic for continuous $A$ under appropriate regularity conditions on $f$. 

In accordance with the terms in the comparison data analysis, we say subject $ i $ beats $ j $ if we observe $ X_{ij} > 0.$
If we assign `beat' and `not beat' as a binary relation among the subjects, this binary relation is {\it strongly stochastically transitive} \citep{davidson1959experimental,fishburn1973binary}, that is, if $ \P(X_{ij} > 0)\geq 1/2 $ and $  \P(X_{jk} > 0)\geq 1/2 $, then $$\P(X_{ik} > 0)\geq \max\{\P(X_{ij} > 0), \P(X_{jk} > 0)\}.$$
Stochastic transitivity ensures that latent scores directly translate into good rankings in practice, which in our case is verified under Assumptions \ref{A1}-\ref{A4}.

\begin{prop}[strong stochastic transitivity]
	Under Assumptions \ref{A1}-\ref{A4}, the binary relation `beat` and `not beat' satisfies the strong stochastic transitivity. 
\end{prop}
\begin{proof}
	According to Definition \ref{Pairwise comparison model},
	\begin{align*}
		\P(X_{ij} > 0) = J(u_i-u_j), \text{ where } J(y) = \int_{(0,\infty)} f(x; y) dx.
	\end{align*}
	According to Assumption \ref{A2} and \ref{A3}, $J(y)$ is an increasing function of $y$ with $J(0) \leq 1/2$. 
	Therefore, $\P(X_{ij} > 0)\geq 1/2$ implies $u_i\geq u_j$ for any $ i, j \in [n]. $ The strong stochastic transitivity follows from $ u_i - u_k \geq \max\{u_i - u_j, u_j - u_k  \} $.
\end{proof}



\color{black}
It is worth mentioning that none of the above assumptions requires comparison data to be discrete or continuous. In particular, various choices of $A$ in the literature fit here:
\begin{itemize}
	\item Binary outcome: $ A = \{-1, 1\} $;
	\item Multiple outcome: $ A = \{-k, \ldots,  k\} $ where $k\in\Z^+$ or $ A = \mathbb{Z}; $
	\item Continuous outcome: $ A = [-a, a]$ where $a\in\R^+$ or $ A = \mathbb{R}. $
\end{itemize}

\begin{remark}
	Some works use non-symmetric sets to parametrize the outcomes. For example, in \cite{MR2749821}, a 5-point Likert scale ($A=[5]$) was employed to represent different levels of preference. This is equivalent to $ A = \{-2,-1,0,1,2\}$ as in our case. The symmetry of $ A $ is not special but will make our analysis more convenient and statements more elegant.
\end{remark}
\begin{remark}
	A similar idea appeared in \cite{MR3604649,MR4025736} when considering binary comparison problems.  Specifically, they assume that the distribution of $X_{ij}$ is determined by some symmetric cumulative distribution function $\Phi(t)$:
	\begin{align*}
	\P(X_{ij} = 1) = \Phi(u_i - u_j), \ \P(X_{ij} = -1) = 1 - \Phi(u_i - u_j).
	\end{align*}
	This generalizes the logistic link function in the BT model and can be regarded as a special case under our setup with $ A = \{-1, 1\}$ and $f(1, y) = \Phi(y)$. 
\end{remark}

\subsection{Comparison graphs}
Random structures on comparison graphs could be added to make the framework introduced in Section \ref{pcm} closer to reality. A popular comparison graph structure hypothesizes that the number of comparisons between any pair of items follows $ \Ber(p_n) $ multiplied by some constant $ T $; see \citep{chen2015spectral,jang2016top,MR3613103,MR3953449}. When $ T = 1 $, their comparison graphs are Erd\H{o}s-R\'enyi graphs $ G(n, p_n) .$ The homogeneous assumption makes the analysis easier by avoiding pathological configurations. Nevertheless, statistical results obtained in this vein often have subtle dependence on the assumptions of sparse random graphs that shadow its connection to the graph topology.  


In our framework, we take a slightly different approach by considering the generalized random graph model $ G(n, p_n, q_n) $ as follows:
\begin{Definition}\label{general graph one}
	 $ G(n, p_n, q_n) $ is a random graph with vertices set $ V_n = [n] $ where each edge $(i, j)\in V_n \times V_n$, $i\neq j$ is formed independently with probability $p_{ij,n} \in [p_n, q_n].$
\end{Definition}
According to Definition \ref{general graph one},  $ p_n $ and $ q_n $ could be taken as the minimum and maximum comparison rate, respectively, that is, $	p_{n} := \min_{i,j\in [n]}p_{ij,n}$ and $q_{n} :=  \max_{i,j\in [n]}p_{ij,n}.$
It is well known in \cite{MR0125031} that $ G(n, p_n) $ is disconnected with high probability if $ p_n < (1-\epsilon) n^{-1}\log n$, for any constant $\epsilon>0$. As a result, there exist at least two connected components which cannot be estimated together unless additional constraints are imposed. 
Therefore, the connectivity threshold $n^{-1}\log n $ is the best possible lower bound on $p_n$ that one can hope for. 
Since $ G(n, p_n, q_n) $ reduces to the Erd\H{o}s-R\'enyi graph  $G(n, p_n)$ when $p_n = q_n$, it shares the same optimal lower bound on $ p_n $.

Given two subjects may have multiple comparisons between them, we assume that the number of comparisons between $ i $ and $ j $, $ n_{ij}$, satisfies $n_{ij} \sim \text{Bin}(T, p_{ij,n}). $ 
One can also adopt the setup in \cite{MR3953449} by assuming $ n_{ij} \sim T\times \Ber(p_{ij,n}) $ and the proof is similar.  
Since $ T $ is a fixed constant,  the reader could take $ T = 1 $ to avoid the additional multiplicative constant for ease of understanding.


\begin{remark}
	We adopt $ G(n, p_n, q_n) $ as the comparison graph when deriving the entry-wise error in Section \ref{section:main}. 
	Compared to the Erd\H{o}s-R\'enyi graph, our choice takes into account the potential heterogeneity of degree sequences. 
	Our analysis applies to a much wider class of comparison graphs beyond random graph models. A more comprehensive discussion on comparison graph structure will be carried out in Section \ref{Comparison graph}. It is worth emphasizing that we require the comparison graph does not depend on the latent score $ \bu $ in the whole paper.  

\end{remark}


\color{black}


\section{Main results}\label{section:main}

We consider estimating $\u$ via the maximum likelihood estimator (MLE) under the general framework introduced in Section \ref{section:setup}. 
Let $\{n_{ij}\}_{i,j\in [n], i\neq j}$ be the number of comparisons observed between subjects $i$ and $j$. $\{X^{(t)}_{ij}\}_{t\in [n_{ij}]}$ denote the outcomes between $i$ and $j$ in $n_{ij}$ comparisons. 
The conditional log-likelihood function given $\{n_{ij}\}_{i,j\in [n], i\neq j}$ is 
\begin{equation*}
l(\v) = \frac{1}{2}\sum_{i, j\in [n]} \sum_{t\in [n_{ij}]} \log f(X_{ij}^{(t)};v_i - v_j),
\end{equation*}
where $\v = (v_1, \cdots, v_n)^T\in\R^n$. 
Since $ l(\v) = l(\v + z \mathbf{1}) $ for $ z \in \mathbb{R}, $ an additional constraint on the parameter space is required to make $\u$ identifiable. 
For this we take $v_1 = 0$, see \citep{MR3012434}. 
The MLE, $\widehat{\u}$ is defined as 
\begin{equation}
\widehat{{\u}} = \argmax_{\v \in \mathbb{R}^n, v_1 = 0}l(\v).\label{mle:formula}
\end{equation}
Note $\eqref{mle:formula}$ only holds formally unless it admits a unique maximizer. 
We will show that, for sufficiently large $n$, $l(\v)$ attains a unique maximum under an appropriate condition on $G(n, p_n, q_n)$.  

\subsection{Existence and uniqueness}
Condition \ref{Condition: existence} below provides a sufficient condition that ensures the unique existence of the MLE defined in \eqref{mle:formula}.

\begin{condition}\label{Condition: existence}
Denote by $G = (V, E)$ the comparison graph where $ V $ is the set of vertices and $ E $ is the set of edges. 
For any non-empty subset $V_1\subsetneq V$, there exist $i\in V_1$, $j\in V\setminus V_1$ and $t\in [n_{ij}]$ such that $X^{(t)}_{ij}>0$. 
In other words, for every partition of $ V, $ into two nonempty sets, a subject in the first set has defeated a subject in the second at least once.
\end{condition}
Note that Condition \ref{Condition: existence} is well known in the BT model \citep{zermelo1929berechnung, ford1957solution}. We show Condition \ref{Condition: existence} is also enough to ensure the unique existence of $ \widehat{{\u}} $ in the generalized model, which is stated in the following lemma.

\begin{Lemma}\label{exist1}
	Under Condition \ref{Condition: existence}, $\widehat{{\u}}$ defined in \eqref{mle:formula} uniquely exists. 
\end{Lemma}

\begin{proof}
See Appendix \ref{eeee}. 
\end{proof}

Next, we demonstrate the Condition $ \ref{Condition: existence} $ holds almost surely under certain assumption. The \emph{dynamic range} of $\u$ plays a key role in our analysis and is defined by 
$$M_n := \max_{i,j}|u_i - u_j|.$$
According to Assumption \ref{A3}, the \emph{global discrepancy} between different subjects is defined by 
\begin{equation*}
C_n^{(1)}  = \int_{[0,\infty)} f(x;M_n) \ dx. 
\end{equation*}
One may think of global discrepancy as an objective measure for the difference between abilities of the subjects. It is easy to verify that $ C_n^{(1)}$ is increasing in $M_n$, and $ C_n^{(1)} \in [1/2,1]$.  
Moreover, we need the following definition to characterize the log-likelihood function:
\begin{Definition}[log-concavity]
	A positive function $h(x;y)$ is said to be log-concave with respect to $y$ if  for every $x$, 
	\begin{align*}
		\frac{\partial^2}{\partial y^2}\log h(x;y)\leq 0, \ \ \forall y\in\R,
	\end{align*}
	and is strictly log-concave with respect to $y$ if the strict inequality holds. 
\end{Definition}

Note that this definition automatically assumes $\log h(x; y)$ is twice differentiable with respect to $y$, and such default regularity assumption will make the following analysis more straightforward. 

\begin{thm}[unique existence of the MLE]\label{exist}
	Suppose that $ f(x;y) $ is strictly log-concave with respect to $ y $, and 
	\begin{align}\label{condition:existence}
		\frac{\log n}{np_{n}\left |\log C_n^{(1)}\right |}\to 0 \text{ as } n \to \infty. 
	\end{align}
	Then for sufficiently large n, with probability at least $ 1 - n^{-3} $, Condition \ref{Condition: existence} holds. Therefore,
	with probability 1, $ \widehat{\u} $ uniquely exists for all but finitely many $ n $.
\end{thm}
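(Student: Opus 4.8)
The plan is to split the statement into a deterministic structural claim and a probabilistic estimate. The structural claim is that Condition~\ref{Condition: existence}, read for \emph{every} ordered partition, already forces $l$ to have a unique maximizer on $\{\v\in\R^n:v_1=0\}\cong\R^{n-1}$; the probabilistic claim is that this condition holds with probability at least $1-2n^{-3}$ under \eqref{condition:existence}. The almost-sure conclusion will then follow from Borel--Cantelli applied to a common probability space carrying all the comparison data.

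\textbf{Existence.} First I would note that, since $v_i-v_j$ is linear in $\v$ and $f$ is log-concave in its second argument, each summand $\log f(X_{ij}^{(t)};v_i-v_j)$ is concave, so $l$ is concave on $\{v_1=0\}$. A continuous concave function attains its maximum as soon as the super-level set $\{\v:l(\v)\ge l(\mathbf{0})\}$ is bounded, and a standard ray argument reduces boundedness to showing $l(t\w)\to-\infty$ as $t\to\infty$ for every unit direction $\w$ with $w_1=0$ (if the set were unbounded, taking $\v_k$ in it with $\v_k/\|\v_k\|\to\w$ and applying concavity along the segment $[\mathbf 0,\v_k]$ would give $l(t\w)\ge l(\mathbf 0)$ for all fixed $t$, a contradiction). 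Given such a $\w$, which is necessarily nonconstant, I would let $S=\{i:w_i=\max_k w_k\}$, a proper nonempty subset, and apply Condition~\ref{Condition: existence} to the ordered partition $(S^c,S)$ to produce an observed outcome in which some $j\in S^c$ defeats some $i\in S$. Because $w_i>w_j$ we have $v_i-v_j=t(w_i-w_j)\to+\infty$, while the recorded margin is negative; Assumptions~\ref{A2}--\ref{A3} then drive the corresponding term $\log f(\cdot\,;v_i-v_j)\to-\infty$, and Assumption~\ref{A4} keeps every other term bounded above, so $l(t\w)\to-\infty$.

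\textbf{Uniqueness.} Here I would differentiate twice to get $\nabla^2 l(\v)=-\sum_{i<j}\sum_t\big(-\tfrac{\partial^2}{\partial y^2}\log f(X_{ij}^{(t)};v_i-v_j)\big)(e_i-e_j)(e_i-e_j)^{T}$, a negated weighted graph Laplacian whose edge weights are strictly positive on every pair with $n_{ij}\ge 1$ by strict log-concavity. Such a Laplacian is negative definite on $\{v_1=0\}$ exactly when the comparison graph is connected; since a disconnection into two components would violate Condition~\ref{Condition: existence} for the partition given by those components, the condition guarantees connectivity, hence strict concavity and a unique maximizer.

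\textbf{Probability bound and obstacle.} The condition fails iff some ordered partition $(S,S^c)$ admits no defeat from $S$ to $S^c$. For a single ordered pair, writing $p_{ij}=\P(i\text{ defeats }j)$ and using $n_{ij}\sim Bin(T,p_n)$, the no-defeat probability is $\E[(1-p_{ij})^{n_{ij}}]=(1-p_np_{ij})^{T}$, and the symmetry in Assumption~\ref{A2} yields $p_{ij}\ge 1-C_n^{(1)}$. A union bound over partitions then gives $\P(\text{fail})\le\sum_{k=1}^{n-1}\binom{n}{k}\big(1-p_n(1-C_n^{(1)})\big)^{Tk(n-k)}$. Using $\binom{n}{k}\le e^{k\log n}$, $k(n-k)\ge kn/2$, and the elementary bound $1-C_n^{(1)}\ge\tfrac12|\log C_n^{(1)}|$ for $C_n^{(1)}\in[1/2,1]$, each term is at most $\exp\!\big(k(\log n-\tfrac{T}{4}np_n|\log C_n^{(1)}|)\big)$, and \eqref{condition:existence} makes the bracket more negative than $\log n$, so the sum is $\le 2n^{-3}$ for large $n$; summability of $2n^{-3}$ then gives the a.s. statement. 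I expect the delicate point to be precisely this estimate: the union runs over exponentially many partitions, so the per-partition decay must beat the entropy factor $\binom{n}{k}$ under the near-minimal sparsity, and it is here, through the interplay of the binomial comparison counts with the comparison $1-C_n^{(1)}$ versus $|\log C_n^{(1)}|$, that condition \eqref{condition:existence} is used sharply.
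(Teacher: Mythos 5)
Your proposal is correct and uses the same overall decomposition as the paper (Condition~\ref{Condition: existence} deterministically forces unique existence; that condition holds with probability at least $1-2n^{-3}$ under \eqref{condition:existence}; Borel--Cantelli gives the almost-sure statement), but the execution of both steps differs in ways worth noting. For the deterministic step, the paper proves existence by taking a maximizing sequence and ruling out components diverging to $\pm\infty$ via Condition~\ref{Condition: existence} together with Assumptions~\ref{A3}--\ref{A4}, and it gets uniqueness by checking concavity through the interpolation inequality and asserting strictness from connectivity; your recession/ray argument plus the Hessian-as-negated-weighted-graph-Laplacian computation is an equivalent but more explicit packaging of the same facts (legitimate here because the paper's definition of log-concavity builds in twice differentiability of $\log f$ in $y$). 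The probabilistic step is where your route genuinely diverges: the paper conditions on a Chernoff-type event that every partition has at least $\tfrac{T}{2}r(n-r)p_n$ crossing comparisons, bounds the conditional failure probability by $(C_n^{(1)})^{N_S}$, and sums via the binomial theorem, whereas you integrate out the $Bin(T,p_n)$ edge counts exactly through the generating function to get $(1-p_np_{ij})^T$ per pair, then reach \eqref{condition:existence} via the elementary inequality $1-c\ge\tfrac12\left|\log c\right|$ on $[1/2,1]$. This avoids the Chernoff bound and the conditioning entirely and yields the same $2n^{-3}$; it is arguably cleaner. One slip to repair: your two bounds $\binom{n}{k}\le e^{k\log n}$ and $k(n-k)\ge kn/2$ are only simultaneously valid for $k\le n/2$, so for $k>n/2$ the per-term estimate as written fails; you must run the argument in terms of $m=\min(k,n-k)$, using $\binom{n}{k}=\binom{n}{n-k}\le n^{m}$ and $k(n-k)\ge mn/2$, picking up a factor of $2$ --- exactly the paper's device of writing the sum as $2\sum_{r\le\lfloor n/2\rfloor}$. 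With that standard fix your argument closes.
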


\begin{proof}
See Appendix \ref{lkg}. 
\end{proof}

\begin{remark}
	When $\lim_{n\to\infty}C_n^{(1)}<1$, {\eqref{condition:existence} is satisfied if there exists $ \epsilon > 0 $ such that $p_n\gtrsim n^{-1}(\log n)^{1+\epsilon}$, which is close to the lower bound of $ p_n $.} 
	Moreover, as $M_n$ measures the spread of $\u$ and $ p_n $ is the minimum edge density in a network, either large $ M_n $ or small $ p_n $ may result in an increasing chance of Condition \ref{Condition: existence} being violated. 
	Since $ C_n^{(1)} \to 1 \text{~as~} M_n \to \infty$, this observation is codified in \eqref{condition:existence}.
\end{remark}

\subsection{Uniform consistency for bounded $ A $}\label{ucb}
We start with the case when $ A $ is a bounded set. Let 
$$ g(x;y) = \frac{\partial }{\partial y}\log f(x; y) = \frac{f_2(x;y)}{f(x;y)}.$$  Define two additional constants depending only on  $ M_n $ and $ g(x; y)$: 
\begin{eqnarray*}
	C_n^{(2)} := \sup\limits_{x\in A, |y|\leq M_n} |g(x;y)|,\	C_n^{(3)} := \inf\limits_{x\in A, |y|\leq M_n+1} |g_2(x;y)|.
\end{eqnarray*}
The ratio of  the above two constants, 
\begin{align}
 \omega_n := \frac{C_n^{(2)}}{C_n^{(3)}}\label{myomega}
 \end{align}
  is used in formulating a condition that ensures the uniform consistency of the MLE in the following theorem:

\begin{thm}[uniform consistency, bounded case]\label{main}
	Suppose that $ f(x;y) $ is strictly log-concave with respect to $ y $, and
	\begin{equation}\label{mle:pn}
		\Delta_n :=  \omega_n\sqrt{\frac{q_n^2(\log n)^3}{np_n^3}} \rightarrow 0\ \ \ \text{as}\ n\rightarrow\infty,
	\end{equation}
	where $\omega_n$ is defined in \eqref{myomega}.
		If \eqref{condition:existence} holds true, then there exists an absolute constant $C>0$, such that for sufficiently large $n$, with probability at least $1-n^{-2}$, $\widehat{\u}$ uniquely exists and satisfies
		\begin{equation*}
			\left\lVert \widehat{{\u}} - \u\right\rVert_\infty\leq C\Delta_n.
		\end{equation*}
		In other words, $\widehat{\u}$ is a uniformly consistent estimator for $\u$. 
\end{thm}

\begin{proof}
See Appendix \ref{proof:ASC}.
\end{proof}


Although we target at a general graph $ G(n, p_n, q_n) $, it is interesting to obtain a simplified lower bound on $ p_n $ when $q_n\lesssim p_n$ and $\max\{M_n, \omega_n\} \lesssim 1$:

\begin{cor}\label{cor1}
	Suppose $\max\{M_n, \omega_n, q_n/p_n\}\lesssim 1$. If there exists $ \epsilon > 0 $ such that
	\begin{equation}\label{cor1: eq}
		p_n \gtrsim\frac{(\log n)^{3+\epsilon}}{n},
	\end{equation}
	then $\widehat{\u}$ uniquely exists a.s. for all but finitely many $n$, and is uniformly consistent for $\u$ with convergence rate at least of order $(\log n)^{-\epsilon/2}$ . 
\end{cor}

\begin{remark}
	When $A$ is bounded, $\omega_n <\infty$ can be easily satisfied by imposing some regularity conditions on $g$. In fact, if both $g(x;y)$ and $g_2(x; y)$ are continuous functions on $A\times\R$, or $g_2(x; y)$ is a continuous function in $y$ and $A$ is a finite set, one can deduce that $C_n^{(2)} <\infty$ and $C_n^{(3)}>0$, which follows from the fact that $A\times [-M_n-1, M_n+1]$ is contained in a compact set in $\R^2$ and the strict log-concavity assumption on $f$. 
\end{remark}

\begin{remark}
	Corollary \ref{cor1} gives a lower bound on $ p_n $ which only differs from the theoretical possible lower bound $ n^{-1}\log n$ by a logarithmic factor, implying that \eqref{cor1: eq} stated in Corollary \ref{cor1} is almost optimal.  The effective lower bound in our result is $ n^{-1}(\log n)^3 $, and the larger the $ p_n $, the faster the convergence rate.
\end{remark}
\color{black}

\subsection{Uniform consistency for general $A$}
We now consider the case when $ A $ is unbounded. Note that Theorem \ref{main} is vacuous unless $ C_n^{(2)} < \infty$ and $C_n^{(3)} > 0   $, which can be easily verified under proper regularity assumptions when $A$ is bounded. As we will see next, the same remains true when $A$ is unbounded except for $C_n^{(2)}<\infty$, which requires an extra condition to be imposed.

\begin{Lemma}\label{exp}
	Let $f(x; y)=h(x)\exp\left(yT(x)-a(y)\right)$ be an exponential family, where $y$ is the natural parameter and $T(x)$ is the sufficient statistic. Let $X_y$ be a random variable whose distribution is $f(x;y)$. Suppose that 
	$\mathcal{V}(y):=\text{Var}[T(X_y)]>0$ is a continuous function in $y$. 
	Then $f(x;y)$ is strictly log-concave, and  $C_n^{(3)}>0$. 
\end{Lemma}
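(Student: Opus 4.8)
The plan is to reduce the entire statement to the classical cumulant identities for exponential families, after which everything follows from a short continuity-and-compactness argument. Writing $\log f(x;y) = \log h(x) + yT(x) - A(y)$, I would first differentiate in $y$ to read off the score function and its derivative directly:
\[
g(x;y) = \frac{\partial}{\partial y}\log f(x;y) = T(x) - A'(y), \qquad g_2(x;y) = \frac{\partial}{\partial y}g(x;y) = -A''(y).
\]
The crucial structural observation is that $g_2$ carries no $x$-dependence at all: it equals $-A''(y)$, a function of $y$ alone.

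Next I would identify $A''(y)$ with the variance $\mathcal{V}(y)$. Starting from the normalization $\int_A h(x)\exp(yT(x))\,dx = \exp(A(y))$ (with the integral replaced by a sum when $A$ is discrete), differentiating twice under the integral sign yields the familiar relations $A'(y) = \E[T(X_y)]$ and $A''(y) = \text{Var}[T(X_y)] = \mathcal{V}(y)$. Consequently $g_2(x;y) = -\mathcal{V}(y)$ for every $x$.

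With this in hand the three conclusions are essentially immediate. Strict log-concavity follows since $\partial_y^2 \log f(x;y) = g_2(x;y) = -\mathcal{V}(y) < 0$ for all $x$, using the hypothesis $\mathcal{V}(y)>0$. For the two constants, I would use that $|g_2(x;y)| = \mathcal{V}(y)$ is independent of $x$, so the definitions collapse to
\[
C_n^{(3)} = \sup_{|y|\leq M_n+1}\mathcal{V}(y), \qquad C_n^{(4)} = \inf_{|y|\leq M_n+1}\mathcal{V}(y).
\]
Because $\mathcal{V}$ is continuous, it attains both its maximum and minimum on the compact interval $[-M_n-1,\, M_n+1]$; finiteness of the attained maximum gives $C_n^{(3)}<\infty$, while positivity of $\mathcal{V}$ forces the attained minimum to be strictly positive, giving $C_n^{(4)}>0$.

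The only step requiring any care is the interchange of differentiation and integration used to obtain the cumulant identities. However, this is a textbook regularity property of exponential families, valid throughout the interior of the natural parameter space, so I expect it to pose no genuine obstacle; the substance of the lemma is the $x$-independence of $g_2$ together with the continuity-on-a-compact-set argument, both of which are routine once the cumulant relations are in place.
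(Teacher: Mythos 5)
Your proof is correct and follows essentially the same route as the paper's: compute $g_2(x;y)=-A''(y)=-\mathcal{V}(y)$ via the standard cumulant identities, deduce strict log-concavity from $\mathcal{V}(y)>0$, and obtain $C_n^{(3)}<\infty$, $C_n^{(4)}>0$ by continuity of $\mathcal{V}$ on the compact interval $[-M_n-1, M_n+1]$. The only difference is that you spell out the differentiation-under-the-integral step that the paper compresses into ``direct computation,'' which is fine but not a substantive departure.
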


\begin{proof}
	It is easy to see from direct computation that 
	\begin{equation*}
		g_2(x;y)= -a''(y)=-\text{Var}[T(X_y)]=-\mathcal{V}(y).
	\end{equation*} 
	Since $\mathcal{V}(y)$ is continuous in $y$ and positive everywhere, $g_2(x;y)<0$ for every $y$, proving that $f(x;y)$ is \emph{strictly log-concave} in $y$, and $ C_n^{(3)} = \min_{|y|\leq M_n+1}\mathcal{V}(y)>0. $
\end{proof}
Note that under the same condition as in Lemma \ref{exp}, 
\begin{equation*}
	C_n^{(2)} = \max_{x\in A, |y|\leq M_n} |T(x) - a'(y)|=\max_{x\in A, |y|\leq M_n} |T(x) - \E[T(X_y)]|,
\end{equation*} 
which in general is unbounded. For example, the family of normal distributions with unit variance has $C_n^{(2)}=\sup_{x>0, |y|\leq M_n}\left |x-y\right |=\infty$. To fix this issue, we introduce the following uniform sub-gaussian condition as an alternative requirement:  


\begin{condition}\label{con-2}
	$\{g(X_y, y)-\E[g(X_y, y)]\}_{|y|\leq M_n}$, as a sequence of random variables indexed by $y$, has a uniformly bounded sub-gaussian norm, that is, there exists some constant $C_n^{(4)}$ such that
	\begin{align*}
		&\sup_{|y|\leq M_n}\|g(X_y, y)-\E[g(X_y, y)]\|_{\psi_2} \\
		:=&\ \sup_{|y|\leq M_n}\inf\left\{t\geq 0: \E\left[\exp\left(\frac{(g(X_y, y)-\E[g(X_y, y)])^2}{t^2}\right)\right]\leq 2\right\}\leq C_n^{(4)}. 
	\end{align*}
\end{condition}

Intuitively, $C_n^{(4)}$ in Condition \ref{con-2} plays an analogous role as $C_n^{(2)}$ in the case of bounded $A$. 
One could also use a uniform sub-exponential condition to further generalize Condition \ref{con-2} by replacing the sub-gaussian norm $ \| \cdot\|_{\psi_2}$ with the sub-exponential norm $ \| \cdot\|_{\psi_1}$.
As the ideas are similar, we will only focus on Condition \ref{con-2} in this article for ease of presentation.   
In particular, the sub-gaussian assumption is sufficient to include the normal distribution as desired. 
With slight abuse of notation, we let 
\begin{align}
\omega_n = \min\left\{\frac{C_n^{(4)}}{C_n^{(3)}},\frac{C_n^{(2)}}{C_n^{(3)}}\right\}.\label{myomega1}
\end{align}
The following theorem establishes the uniform consistency of the MLE for general $ A $:

\begin{thm}[Uniform consistency of the MLE, general case]\label{general}
	Suppose that $ f(x;y) $ is strictly log-concave with respect to $ y $. Suppose that Condition \ref{con-2} holds and
	\begin{equation*}
	\Delta_n :=  \omega_n\sqrt{\frac{q_n^2(\log n)^3}{np_n^3}} \rightarrow 0\ \ \ \text{as}\ n\rightarrow\infty, 
\end{equation*}
where $\omega_n$ is defined in \eqref{myomega1}.
		If \eqref{condition:existence} holds true, then there exists an absolute constant $C>0$, such that for sufficiently large $n$, with probability at least $1-n^{-2}$, $\widehat{\u}$ uniquely exists and satisfies
		\begin{equation*}
			\left\lVert \widehat{{\u}} - \u\right\rVert_\infty\leq C\Delta_n.
		\end{equation*}
		In other words, $\widehat{\u}$ is a uniformly consistent estimator for $\u$. 
\end{thm}

\begin{proof}
See Appendix \ref{proof:ASC}.
\end{proof}

\color{black}

\begin{remark}
	Although Theorem \ref{general} may seem restrictive, it covers many common unbounded models of interest, that is, 
	the exponential family parametrization mentioned in Lemma \ref{exp} with the sub-gaussian property. The example of normal distribution is given in the following section. 
\end{remark}

\color{black}

\section{Examples}\label{section:examples}
In this section, we demonstrate that several well-known parametric pairwise comparison models can be covered within our setup. 
Particularly, the uniform consistency of the MLE can be proved for all these models provided $p_n$ is reasonably sparse. To the best of our knowledge, except for the Bradley-Terry model, the uniform consistency property of all the other models in this section is new.  

\subsection{Binary outcomes} \label{panger} 
We first consider the Bradley-Terry (BT) model and the Thurstone-Mosteller model \citep{thurstone1927law,mosteller2006remarks}. In both cases, $ A = \{-1,1\} $ and $ f(x;y) $ is given by
\begin{align*}
	&f(1;y) = \Phi(y) &\ f(-1;y) = 1 -  \Phi(y),
\end{align*} 
with 
\begin{align*}
 \Phi(y)  &= \frac{e^{y}}{1+e^{y}}, &\text{(Bradley-Terry model)},\\
 \Phi(y) & = \frac{1}{\sqrt{2\pi}}\int_{-\infty}^y e^{-\frac{x^2}{2}}dx,&\text{(Thurstone-Mosteller model)},
\end{align*}
It is easy to check that in both cases $f(x; y)$ is {\it valid} and {\it strictly log-concave} with respect to $y$. 
According to Theorem \ref{main}, we have the following results:
\begin{cor}[Bradley-Terry model]\label{BT_model}
	In the Bradley-Terry model, if $q_n, p_n$ and $ M_n $ satisfy the following bound:
	\begin{equation*}
		\Delta_n =  e^{M_n}\sqrt{\frac{q_n^2(\log n)^3}{np_n^3}} \to 0\ \ \ \text{as~} n \to \infty, 
	\end{equation*}
	then for sufficiently large $n$, with probability at least $1-n^{-2}$, $\widehat{\u}$ uniquely exists and satisfies $ \left\lVert \widehat{{\u}} - \u\right\rVert_\infty\lesssim \Delta_n $. In particular, when $M_n = \Omega(1)$ and taking $ G(n, p_n, q_n) $ as the Erd\H{o}s-R\'enyi graph, $p_n\gtrsim n^{-1}(\log n)^{3 + \epsilon}$ for some $ \epsilon > 0 $ is sufficient for the uniform consistency of $\widehat{\u}$, namely, $ \left\lVert \widehat{{\u}} - \u\right\rVert_\infty\lesssim (\log n)^{-\epsilon/2}.$ 
\end{cor}

\begin{cor}[Thurstone-Mosteller model]\label{TM_model}
	In the Thurstone-Mosteller model, if $q_n, p_n$ and $ M_n $ satisfy
	\begin{align*}
\Delta_n =  e^{M_n^2/2}\sqrt{\frac{q_n^2(\log n)^3}{np_n^3}} \to 0 \ \ \ \text{as~} n \to \infty, 
	\end{align*}
	then for sufficiently large $n$, with probability at least $1-n^{-2}$, $\widehat{\u}$ uniquely exists and satisfies $ \left\lVert \widehat{{\u}} - \u\right\rVert_\infty\lesssim \Delta_n $. In particular, when $M_n = \Omega(1)$ and taking $ G(n, p_n, q_n) $ as the Erd\H{o}s-R\'enyi graph, $p_n\gtrsim n^{-1}(\log n)^{3 + \epsilon}$ for some $ \epsilon > 0 $ is sufficient for the uniform consistency of $\widehat{\u}$, namely, $ \left\lVert \widehat{{\u}} - \u\right\rVert_\infty\lesssim (\log n)^{-\epsilon/2}. $
\end{cor}

In both models, condition \eqref{mle:pn} implies condition \eqref{condition:existence}. Corollary \ref{TM_model} gives the first entry-wise error analysis of the MLE in the Thurstone-Mosteller model while Corollary \ref{BT_model} provides the entry-wise error bound for the MLE in the BT model under a general comparison graph. 
It is worth pointing out that the uniform consistency of the MLE in the BT model has been well studied under the structure of the Erd\H{o}s-R\'enyi graphs. 
Although we focus on comparison models with a more general structure, our result is effectively the same as \cite{BTL} and only slightly worse than \cite{chen2020partial} up to logarithmic factors. 	With some technical refinement of our proof, we can exactly recover the result in \cite{BTL}.


\begin{remark}
Both our work and \cite{BTL} utilize the idea of chaining to deal with the entry-wise error, which builds an upward nested sequence of vertices at the ends of the estimation error spectrum. 
However, a direct application of the proof in \cite{BTL} in our case will lead to an overlapping scenario that is difficult to analyze. 
Instead, our new proof exploits a symmetry property (Assumption \ref{A2}) to avoid technical difficulty. 
\end{remark}


\color{black}

\subsection{Multiple outcomes} 
\cite{agresti1992analysis} provided two generalizations the BT model by taking account of multiple outcomes (ordinal data): one is the cumulative link model and the other is the adjacent categories model. 
It can be verified that both models satisfy the validity and strict log-concavity assumptions which lead to the uniform consistency of the corresponding MLE under our framework. 
Specifically, when there are only three outcomes (`win', `tie', `loss', or $A=\{-1, 0, 1\}$), the cumulative link model and the adjacent categories model reduce to the Rao-Kupper model \citep{MR217963} and the Davidson model \citep{davidson1970extending}, respectively. 
In this section, we prove the uniform consistency of the MLE for these two models. 

The link function $ f(x;y) $ in the Rao-Kupper model is given by
\begin{align*}
f(1;y) = \frac{e^y}{e^y + \theta}; \ f(0;y) =\frac{(\theta^2 - 1)e^y}{(e^y + \theta)(\theta e^y + 1)}; \
f(-1;y) =\frac{1}{\theta e^y + 1},
\end{align*}
where $ \theta > 1 $ is the threshold parameter which is assumed to be fixed. The following corollary is straightforward from Theorem \ref{main}.
\begin{cor}[Rao-Kupper model]\label{RK model}
	In the Rao-Kupper model, for fixed $ \theta >1$, if $q_n, p_n$ and $ M_n $ satisfy
	\begin{equation*}
	\Delta_n =  e^{M_n}\sqrt{\frac{q_n^2(\log n)^3}{np_n^3}} \to 0\ \ \ \text{as~} n \to \infty, 
\end{equation*}
then for sufficiently large $n$, with probability at least $1-n^{-2}$, $\widehat{\u}$ uniquely exists and satisfies $ \left\lVert \widehat{{\u}} - \u\right\rVert_\infty\lesssim \Delta_n $. In particular, when $M_n = \Omega(1)$ and taking $ G(n, p_n, q_n) $ as the Erd\H{o}s-R\'enyi graph, $p_n\gtrsim n^{-1}(\log n)^{3 + \epsilon}$ for some $ \epsilon > 0 $ is sufficient for the uniform consistency of $\widehat{\u}$, namely, $ \left\lVert \widehat{{\u}} - \u\right\rVert_\infty\lesssim (\log n)^{-\epsilon/2}. $
\end{cor}

As opposed to the Rao-Kupper model, the Davidson model \citep{davidson1970extending} considers an alternative outcome of being tie: 
\begin{align*}
	f(1;y) = \frac{e^y}{e^y + \theta e^{\frac{y}{2}} + 1}; \  f(0;y) = \frac{\theta e^{\frac{y}{2}}}{e^y + \theta e^{\frac{y}{2}} + 1};\
	f(-1;y) = \frac{1}{e^y + \theta e^{\frac{y}{2}} + 1},
\end{align*}
where $ \theta > 0 $ is assumed to be fixed. 
Similarly, we have the following corollary. 
\begin{cor}[Davidson model]\label{D_model}
In the Davidson model, for fixed $ \theta >0$, if $q_n, p_n$ and $ M_n $ satisfy
	\begin{equation*}
	\Delta_n =  e^{M_n/2}\sqrt{\frac{q_n^2(\log n)^3}{np_n^3}} \to 0\ \ \ \text{as~} n \to \infty, 
\end{equation*}
then for sufficiently large $n$, with probability at least $1-n^{-2}$, $\widehat{\u}$ uniquely exists and satisfies $ \left\lVert \widehat{{\u}} - \u\right\rVert_\infty\lesssim \Delta_n $. In particular, when $M_n = \Omega(1)$ and taking $ G(n, p_n, q_n) $ as the Erd\H{o}s-R\'enyi graph, $p_n\gtrsim n^{-1}(\log n)^{3 + \epsilon}$ for some $ \epsilon > 0 $ is sufficient for the uniform consistency of $\widehat{\u}$, namely, $ \left\lVert \widehat{{\u}} - \u\right\rVert_\infty\lesssim (\log n)^{-\epsilon/2}. $
\end{cor}

Similar to the examples in the previous section, condition \eqref{mle:pn} implies condition \eqref{condition:existence} for both the Rao-Kupper model and the Davidson model.
Utilizing a similar argument, the uniform consistency result can be proved for the extensions of the Thurstone-Mosteller model \citep{maydeu2001limited, maydeu2002limited}, which we do not state here.


\subsection{Continuous outcomes}
We consider the paired cardinal  model  introduced and studied in \cite{MR3504618}. 
In this case, $ A = \mathbb{R}$, and $ X_{ij} $ follows a normal distribution with mean $u_i-u_j$ and variance $\sigma^2$, that is, $X_{ij}\sim \mathcal N(u_i - u_j, \sigma^2) $ and 
\begin{align*}
f(x;y) = \frac{1}{\sqrt{2\pi \sigma^2}}e^{-\frac{(x - y)^2}{2\sigma^2}}, \ x\in\R.
\end{align*}
Cardinal models take a continuous spectrum of measurement values and often contain more information than ordinal models under certain conversion assumptions \citep{MR3504618}. 
Particularly, the Thurstone-Mosteller model studied in Section \ref{panger} can be viewed as a thresholded version of the paired cardinal model; the binary outcome between $i$ and $j$ corresponds to the cases where $X_{ij}>0$ and $X_{ij}\leq 0$. 

Validity and log-concavity assumptions on $f$ are easy to verify for the paired cardinal model. 
To apply Theorem \ref{general}, we have
\begin{cor}[Paired cardinal model]\label{Normal distribution model}
	In the paired cardinal model, for fixed $ \sigma>0$, if $q_n$, $p_n$ and $ M_n $ satisfy
	\begin{equation*}
M_n	e^{\frac{M_n^2}{2\sigma^2}}\frac{\log n}{np_n} \to 0\ \ \ \text{as~} n \to \infty, 
	\end{equation*}	
	and
	\begin{equation*}
\Delta_n =  \sqrt{\frac{q_n^2(\log n)^3}{np_n^3}} \to 0\ \ \ \text{as~} n \to \infty,
	\end{equation*}
then for sufficiently large $n$, with probability at least $1-n^{-2}$, $\widehat{\u}$ uniquely exists and satisfies $ \left\lVert \widehat{{\u}} - \u\right\rVert_\infty\lesssim \Delta_n $. In particular, when $M_n = \Omega(1)$ and taking $ G(n, p_n, q_n) $ as the Erd\H{o}s-R\'enyi graph, $p_n\gtrsim n^{-1}(\log n)^{3 + \epsilon}$ for some $ \epsilon > 0 $ is sufficient for the uniform consistency of $\widehat{\u}$, namely, $ \left\lVert \widehat{{\u}} - \u\right\rVert_\infty\lesssim (\log n)^{-\epsilon/2}. $ 
\end{cor}
\section{Comparison graph structure}\label{Comparison graph}
In this section, we summarize two topological conditions of a graph that will be used in the consistency analysis. In particular, we first assume that comparison graphs are deterministic and enjoy certain topological properties; we then
verify these properties in several classes of random graph models of practical interest. 
Our approach decouples the randomness in the comparison model and comparison graph thus is more transparent in terms of illustrating how graph topology influences the statistical procedures in pairwise data analysis.

Some definitions and notations in the graph theory are introduced here \citep{MR1421568}. Let $ G = (V,E) $ be an undirected graph. For $U\subset V$, the connectivity between $U$ and $U^\complement$ can be measured by the following ratio cut $h_{G}(U)$:
\begin{align*}
h_G(U) = \frac{|\partial U|}{\min\{|U|, |U^\complement|\}}.
\end{align*}
A large value of $h_{G}(U)$ suggests that $U$ and $U^\complement$ are well-connected. The global connectivity of $G$ can be measured by taking the minimum of $h_{G}(U)$ over $U$,
\begin{align*}
h_G  = \min_{U\subset V} h_G(U),
\end{align*}
which is called the \emph{modified Cheeger constant} or the \emph{isoperimetric number}, of $G$.

In the analysis of comparison graph models, we are concerned with the asymptotic behavior of the model, which requires us to work with a sequence of graphs. 
For convenience, we let $\{G_n\}_{n\in\N}$ denote a graph sequence where $G_n = (V_n, E_n)$ with $|V_n| = n$. 
We now introduce two topological properties that are central to our analysis:
\begin{Definition}[ASC]\label{def3}
	A sequence of graphs $\{G_n\}_{n\in\N}$ is said to be asymptotically strongly connected (ASC) with rate $\{\omega_n\}_{n\in\N}$ if  
	\begin{align*}
	\lim_{n\to\infty}\omega_n\Gamma_{G_n}^{ASC}= {0}, \text{ where  }\Gamma_{G_n}^{ASC} := \sqrt{\frac{\log n}{h_{G_n}}}.
	\end{align*}
\end{Definition}

\begin{Definition}[RE]\label{def2}
	For $G = G(V, E)$, an upward nested sequence of non-empty vertices set $\{A_k\}_{k=1}^K$ 
	(that is $ A_k\subsetneq A_{k+1}\subseteq V$)
	 is called admissible if $|\EE(A_k, A_{k+1}\setminus A_k)|\geq |\partial A_k|/2$ for all $k<K$.   
	Denote the set of admissible sequences of $G$ as $\mathscr A(G)$. 
	$\{G_n\}_{n\in\N}$ is called rapidly expanding (RE) with rate $\{\omega_n\}_{n\in\N}$ if 
	\begin{align*}
\lim_{n\to\infty}\omega_n\Gamma_{G_n}^{RE}={0}, \text{ where  }\Gamma_{G_n}^{RE} := \max_{\{A_k\}_{k=1}^K\in \mathscr A(G_n)}\sum_{k=1}^{K - 1}\sqrt{\frac{\log n}{h_{G_n}(A_k)}}.
	\end{align*}
\end{Definition}

ASC is a global property that requires ``small'' subsets of $G_n$ have relatively large edge boundary as $n\to\infty$, and RE is a cumulative version of ASC (defined for all rapidly expanding sequences).   
It is easy to verify that RE with rate $\{\omega_n\}_{n\in\N}$ implies $ ASC $ with the same rate by taking any admissible sequence with $ A_1 $ that satisfies $ h_{G_n}(A_1) = h_{G_n}$. 
Note that an admissible sequence $ \{A_k\}_{k=1}^K$ is strictly increasing by definition, therefore there exists a natural upper bound for $ K $, that is, $ K \leq n $.


A sufficient condition for the uniform consistency result can be formulated using RE, with the convergence rate of $ \Gamma_{G_n}^{RE}  $ (that is,  $\omega_n$) appropriately chosen to encode the information of the pairwise comparison parametrization. The uniform consistency result for the MLE can be stated as follows:

\begin{thm}[uniform consistency]\label{main: general}
	Suppose $ f(x;y) $ is strictly log-concave with respect to $y$ and Condition \ref{Condition: existence} and \ref{con-2} hold.
	If the comparison graph sequence $\{G_n\}_{n\in\N}$ is RE with rate $\{\omega_n\}_{n\in\N}$, where $\omega_n$ is defined in \eqref{myomega1}, that is,
		\begin{align*}
	\Delta_n^{RE}:= \omega_n\Gamma_{G_n}^{RE}  \to 0 \ \text{as} \ n \to \infty, 
	\end{align*}
 then there exists an absolute constant $C>0$, such that for sufficiently large $n$, with probability at least $1-n^{-2}$, $\widehat{\u}$ uniquely exists and satisfies
\begin{equation}\label{eq: main: general}
\left\lVert \widehat{{\u}} - \u\right\rVert_\infty\leq C\Delta_n^{RE}.
\end{equation}
In other words, $\widehat{\u}$ is a uniformly consistent estimator for $\u$. 
\end{thm}

\begin{proof}
See Appendix \ref{section:proof}.
\end{proof}

\begin{remark}
Theorem \ref{main: general} requires the comparison graph sequence $\{G_n\}_{n\in\N}$ is RE. The requirement $|\EE(A_k, A_{k+1}\setminus A_k)|\geq |\partial A_k|/2$ in the definition of admissible sequences could be changed to $ |\EE(A_k, A_{k+1}\setminus A_k)|\geq |\partial A_k|/(1+\epsilon) $ for any absolute constant $ \epsilon > 0$. In that case, the result in Theorem 4 remains unchanged up to a multiplicative constant in \eqref{eq: main: general}.
	
\end{remark}

Theorem \ref{main: general} provides the uniform consistency of the MLE in the generalized comparison model under a general comparison graph. 
The convergence rate consists of two parts: $ \omega_n $ and $ \Gamma_{G_n}^{RE}  $. $ \omega_n $ is defined as $ \min\{C_n^{(4)}, C_n^{(2)}\}/C_n^{(3)} $ , which relies only on the comparison model $ f $ and the dynamic range of $ \bu $. $ \Gamma_{G_n}^{RE} $ is concerned with the topological property of the comparison graph. 
Compared with Theorem \ref{general}, Theorem \ref{main: general} replaces $ \Delta_n $ with $ \Delta_n^{RE}. $ 
Consequently, Theorem \ref{general} implies Theorem \ref{main: general} if the $ \Gamma_{G_n}^{RE} $ in $ G(n, p_n, q_n) $ is bounded by $ \sqrt{{q_n^2(\log n)^3}/{np_n^3}} $.
To further demonstrate the utility of Theorem \ref{main: general}, we also prove that RE is satisfied in
the stochastic block model \citep{holland1983stochastic} with an additional structure.
These results are summarized in the following proposition: 

\begin{prop}\label{random graph ensembles}\label{myprop}
	Let $\{G_n\}_{n\in\N}$ be a (random) graph sequence. For all sufficiently large $n$, the following events hold with probability at least $1-n^{-2}$, 
	\begin{itemize}
		\item[1.] If $ G_n = G(n, p_n) $, then $ \Gamma_{G_n}^{RE} \lesssim  \sqrt{{(\log n)^3}/{np_n}}$.
		\item[2.] If $ G_n = G(n, p_n, q_n) $ (see Definition \ref{general graph one}), then $ \Gamma_{G_n}^{RE} \lesssim  \sqrt{{q_n^2(\log n)^3}/{np_n^3}}$.
		\item[3.] If $ G_n $ is a stochastic block model with finite blocks and  its lower bound of edge density is $ p_n $, then $ \Gamma_{G_n}^{RE} \lesssim  \sqrt{{(\log n)^3}/{np_n}}$.
	\end{itemize}
\end{prop}

\begin{proof}
See Appendix \ref{proof:ASCC}.
\end{proof}

\color{brickred}
\color{black}

\section{Numerical results}\label{section:num}
In this section, we first conduct numerical simulations to evaluate the large-sample performance of the MLE in the Davidson model with threshold parameter $\theta=1$ and the paired cardinal model with variance parameter $\sigma^2=1$. Since extensive numerical results exist for both models using real datasets \citep{MR3504618, MR3887567}, our simulations are more focused on the synthetic data, which mainly serve to verify the asymptotic results in Sections \ref{section:main} and \ref{section:examples}.
The corresponding results are reported in Section \ref{sec:6.1}. 
Moreover, as our framework provides ample flexibility for model parametrization, it is tempting to test different model parametrizations on a dataset and select the optimal one for use in practice using model selection methods. 
We empirically investigate this problem on a real dataset in Section \ref{han}. 



\subsection{Asymptotic performance}\label{sec:6.1}



We first test the asymptotic uniform convergence of the MLE when the network is sparse. 
Note that a large value of $T$ can inadvertently make the network dense even if $p_n$ is small. As such, we set $T=1$ in the following simulations. 
The comparison graph model is set as $ G(n, p_n, q_n)$, with its size chosen in an increasing manner to demonstrate the expected convergence.
Specifically, we test on $6$ different values for $n$: $2000, 4000, 6000, 8000, 10000$ and $12000$. For each $n$, the latent score vector $\u$ is generated by independently sampling its components from the uniform distribution on $[-0.5, 0.5]$, which guarantees that $M_n\leq 1$. The minimum comparison rate $p_n$ is taken as $n^{-1}(\log n)^3, \sqrt{n^{-1}(\log n)^3}$ and $1/2$, corresponding to the underlying network being sparse, moderately sparse and dense, respectively. For convenience, we let the maximum comparison rate $ q_n $ is proportional to $ p_n $, that is, $ q_n = 2p_n. $ Values of $ p_n $ under different $ n $ are presented in Table \ref{pn_mn}.


\begin{table}
	\centering
	\begin{tabular}{|c|c|c|c|c|}
		\hline
		$n$     & $p_n =  \sqrt{n^{-1}(\log n)^3} $         & $ p_n =  {n^{-1}(\log n)^3} $         & $ M_n = \log \log n/2 $    & $M_n = 2\log \log n$    \\ \hline
		2000  & 0.469(937) & 0.220(439) & 1.014 & 4.057 \\ \hline
		4000  & 0.378(1511) & 0.143(571) & 1.058 & 4.231 \\ \hline
		6000  & 0.331(1988) & 0.110(658) & 1.082 & 4.327 \\ \hline
		8000  & 0.301(2410) & 0.091(726) & 1.098 & 4.392 \\ \hline
		10000 & 0.280(2795) & 0.078(781) & 1.110 & 4.441 \\ \hline
		12000 & 0.263(3153) & 0.069(829) & 1.120 & 4.480 \\ \hline
	\end{tabular}
	\caption{\small The value of $ p_n $ and $ M_n $ given the different $ n $. In addition,  the average numbers of comparisons one subject has (in parentheses) is in the column of $ p_n $.}
	\label{pn_mn}
\end{table} 

For every fixed $n, p_n$ and $\u$,  the comparison data is generated under the respective model with $\widehat{\u}$ computed using a minorization--maximization (MM) algorithm in \cite{MR2051012}. We then calculate the $\ell_\infty$ error $\|\widehat{\u}-\u\|_\infty$. To check uncertainty, for each $n$ and $p_n$, the experiment is repeated $300$ times with its quartiles recorded. The results are reported in the first two plots in Figure \ref{davidson}. 


\begin{figure}[htbp]
	\centering
	\includegraphics[width=0.45\textwidth]{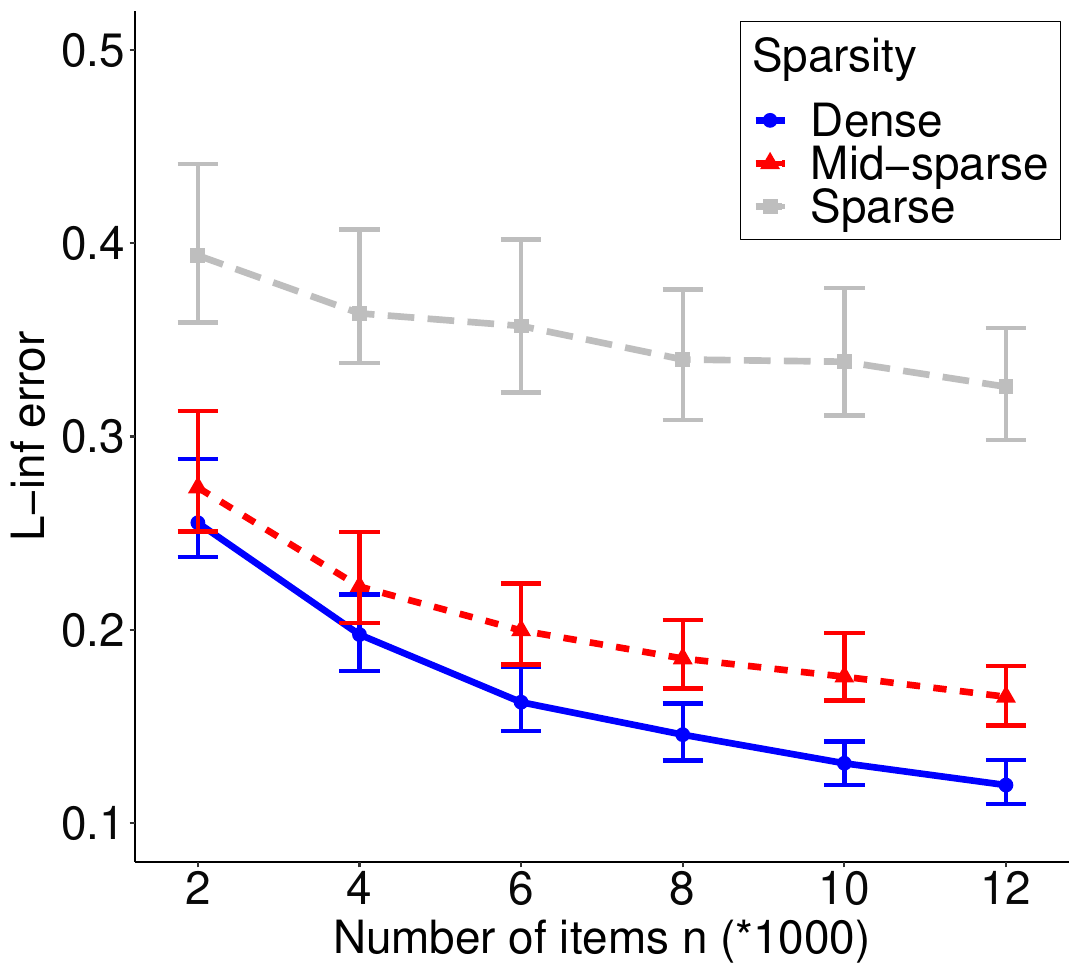}\hspace{1.25 cm}
	\includegraphics[width=0.45\textwidth]{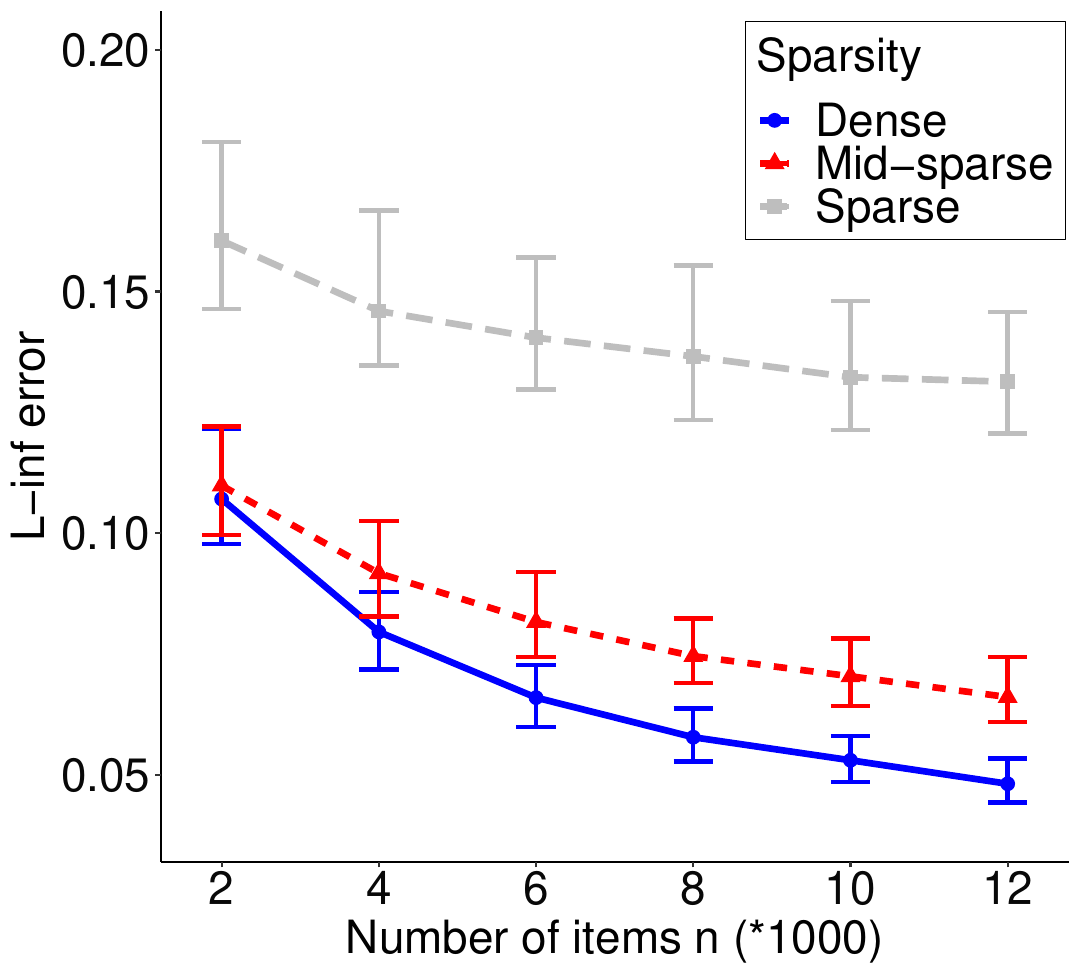}\\
	 \vspace{1 cm}
	\includegraphics[width=0.45\textwidth]{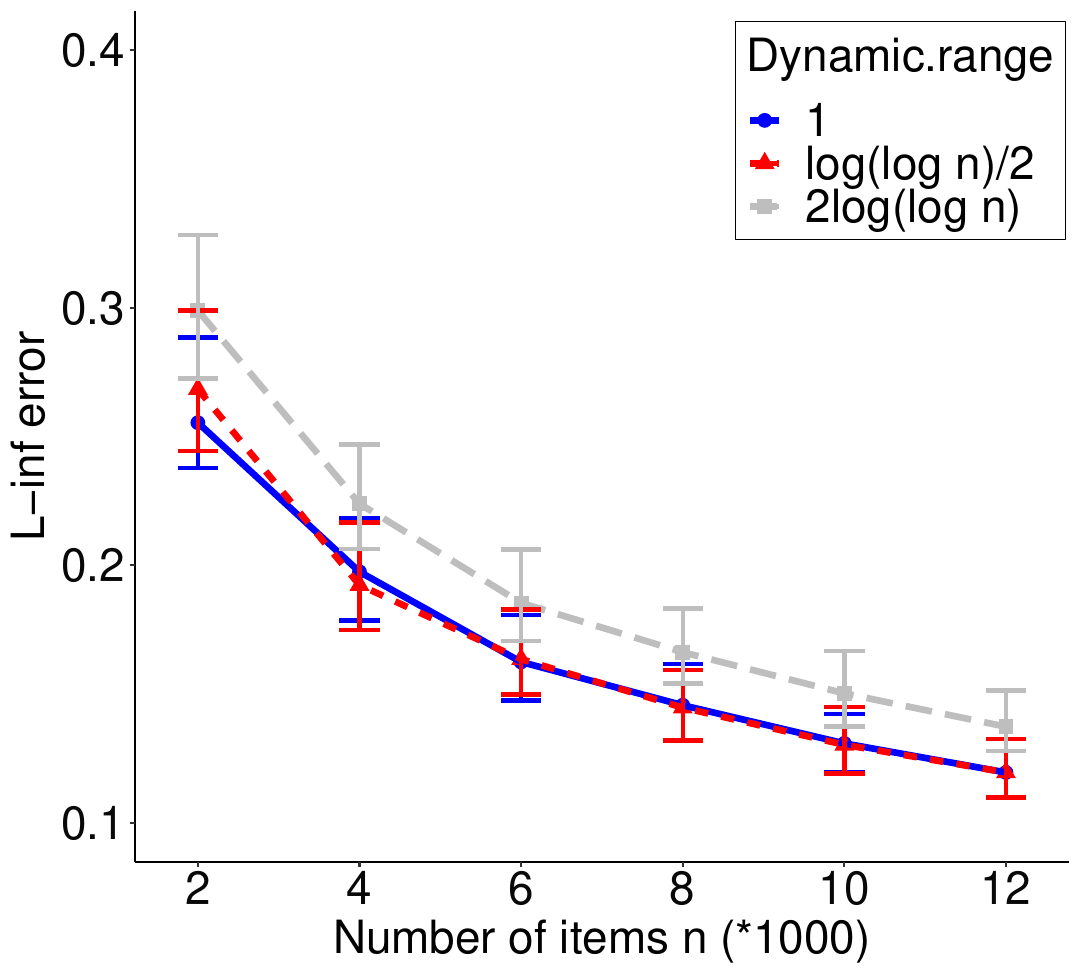}\hspace{1.25 cm}
	\includegraphics[width=0.45\textwidth]{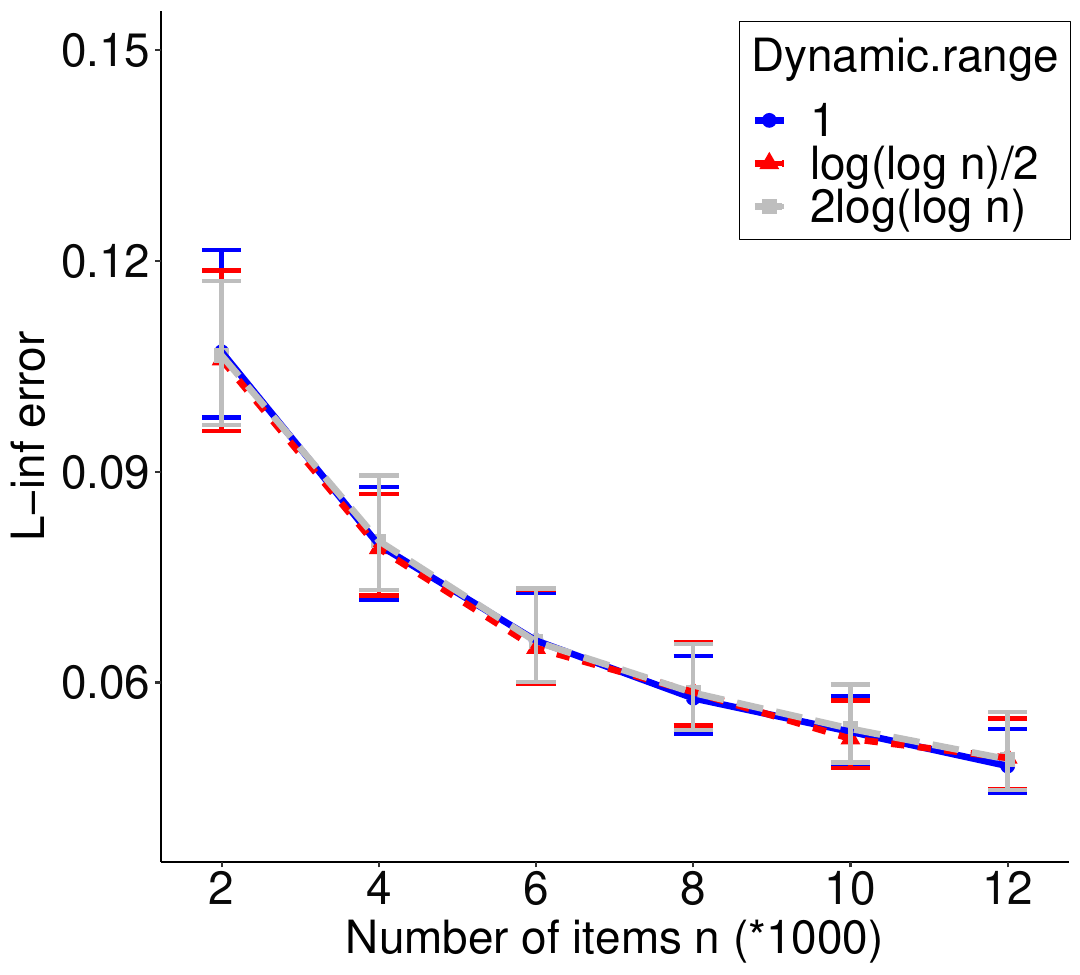}
	\caption{\small Convergence of the MLE in the Davidson model with threshold parameter $\theta=1$ (top left) and the paired cardinal model with variance parameter $\sigma^2=1$ (top right) under three different sparsity regimes: dense $(p_n = 1/2)$, mid-sparse $(p_n = \sqrt{n^{-1}(\log n)^3})$ and sparse $p_n = n^{-1}(\log n)^3$. 
	Convergence of the MLE in the Davidson model with threshold parameter $\theta=1$ (bottom left) and the paired cardinal model with variance parameter $\sigma^2=1$ (bottom right) under three different dynamic ranges, with $p_n = 0.5$. 
	Vertical bars are the quartiles of the $300$ repetitions.}
	\label{davidson}
\end{figure}

In both models and three different sparsity regimes, $\|\widehat{\u}-\u\|_\infty$ decreases to $0$ as $n$ grows to infinity. This numerically verifies the uniform consistency of the MLE as proved in Theorems \ref{main} and \ref{general}. Another observation, which is not unexpected, is that the convergence rate of the MLE closely depends on the density parameter $p_n$: the larger the $p_n$, the faster the convergence. Particularly, when $p_n$ is chosen at the critical level obtained in our analysis, $\|\widehat{\u}-\u\|_\infty$ decays rather slowly compared to the denser regimes. Such drawback seems mitigated by increasing the size of the network, suggesting that networks with extremely large sizes would be more tolerant for a low comparison rate. This demonstrates the potential applicability of our results in studying large complex networks (such as social networks) using under-observed comparison data. 

We also investigate how the convergence of the MLE depends on the varying dynamic range $M_n$. To do so, we fix $ p_n = 0.5$ and take $ M_n $ as $ 1, \log \log n/2 $ and $ 2 \log \log n $, respectively. 
The values of $ M_n $ under different $ n $ can be found in Table \ref{pn_mn}.
According to our results, small $M_n$'s are better for uniform consistency of the MLE in the Davidson model, which is numerically verified in Figure \ref{davidson}.  
As a contrast, the paired cardinal model seems not sensitive to the changing magnitude of $M_n$.  This may be because the convergence rate $ \Delta_n $ in the paired cardinal model is independent of $M_n$; see Corollary \ref{Normal distribution model}. 

\subsection{ATP data analysis}\label{han}
In this section, we model a real pairwise comparison network using three different parametrizations. 
The dataset under our consideration is the ATP dataset \footnote{The dataset can be found at \href{www.tennis-data.co.uk}{www.tennis-data.co.uk}.} from 2000 to 2018. 
The ATP match contains four Grand Slams, the ATP World Tour Masters 1000, the ATP World Tour 500 series, and several tennis series of the year.
For convenience, we focus on the Best of 3 (BO3) matches, which contain two or three sets and have four possible outcomes $ A = \{-2, -1, 1, 2\} $.
For example, the outcome $2:1$ between $i$ and $j$ corresponds to  $ X_{ij} = 2-1 = 1. $ 
As a result, we remove the competitions from Grand slams. 
To let Condition \ref{Condition: existence} hold, we tease out the players who never win or lose the games. 
After cleaning, the dataset includes nearly 26,000 competitions and 954 players.

For model parametrization, we consider three relevant models: the general BT model, the cumulative link model (CLM4), and the adjacent categories logit model (ACLM4) with four outcomes  \citep{agresti1992analysis}.
Specifically, in the general BT model, if there is a comparison between $ i $ and $ j $, then
\begin{eqnarray}\label{1111}
\P(\text{the outcome is 2:1}) &=& 2\Phi^2(u_i - u_j)(1 - \Phi(u_i - u_j)),\nonumber\\
\P(\text{the outcome is 2:0}) &=& \Phi^2(u_i - u_j)
\end{eqnarray}
where $\Phi(x)$ is the logistic link function. 
It can be verified that \eqref{1111} implies that the results of the three sets are independent, and each of them follows the BT model.
In CLM4, 
\begin{eqnarray*}\label{2222}
\P(\text{the outcome is 2:1}) &=& \frac{(\theta - 1)e^{u_i - u_j}}{(\theta + e^{u_i - u_j})(1 + e^{u_i - u_j})},\nonumber\\
\P(\text{the outcome is 2:0}) &=& \frac{e^{u_i - u_j}}{\theta + e^{u_i - u_j}}, \ \theta > 1.
\end{eqnarray*}
In ACLM4, 
\begin{eqnarray*}\label{3333}
	\P(\text{the outcome is 2:1}) &=& \frac{\theta e^{2(u_i - u_j)/3}}{1 + \theta e^{(u_i - u_j)/3} + \theta e^{2(u_i - u_j)/3} + e^{u_i - u_j}},\nonumber\\
	\P(\text{the outcome is 2:0}) &=& \frac{e^{u_i - u_j}}{1 + \theta e^{(u_i - u_j)/3} + \theta e^{2(u_i - u_j)/3} + e^{u_i - u_j}}, \ \theta > 0.
\end{eqnarray*}
It can be checked that all models are within the framework developed in Section \ref{section:setup} and satisfy the log-concavity condition.

We apply several model selection criteria, including the Akaike information criterion (AIC), Bayesian information criterion (BIC) and Leave-one-out
cross-validation (LOOCV), for model evaluation. 
In particular, we choose the prediction error in LOOCV as the cross entropy (negative log-likelihood). For example, if the validation data is the comparison between $ i $ and $ j $ with outcome $ a^* $, then the prediction error is given as
\begin{equation*}
\text{error} = -\sum_{a \in A} {1}_{\{a = a^*\}} \log f(a; \widehat{u}_i - \widehat{u}_j).
\end{equation*}


The result is presented in Table \ref{ATP}. 
We observe that both ACLM4 and CLM4 yield better performance than the general BT model in terms of AIC, BIC, and LOOCV, for which a possible explanation is that the outcomes of different sets in the same match are not independent.
Without assuming independence, both ACLM4 and CLM4 seem to lead to a better fit for the data. 
On the other hand, LOOCV reflects the overall prediction error. 
If we consider using random guessing as to the benchmark, the LOOCV of random guessing is 1.3863 ($ \log 4 $). 
Consequently, all the models we have tested in this example (the general BT model, ACLM4, and CLM4) achieve better predictions than random guessing.

\begin{table}
	\centering
	\begin{tabular}{|c|c|c|c|}
		\hline
		$n$     & General BT model        & ACLM4       & CLM4      \\ \hline
		AIC  & 67810.5 & 67471.1 & 66951.1 \\ \hline
		BIC  & 75601.6 & 75270.3 & 74750.3 \\ \hline
		LOOCV  & 1.3211 & 1.2420 & 1.2374  \\ \hline
	\end{tabular}
	\caption{\small Model selection results of the general BT model, ACLM4, CLM4 applied to the ATP dataset using AIC, BIC and LOOCV.}
	\label{ATP}
\end{table} 

\color{black}

\section{Discussion}\label{section:discussion}


In this paper, we introduced a general framework for statistical network analysis using pairwise comparison data. Our framework enjoys abundant parametrization flexibility for practical purposes. 
Assuming the link function is \emph{valid} and \emph{strictly log-concave} with respect to the parametrization variable, we provided a sufficient condition for the MLE to be uniformly consistent, which can be summarized as a graph topological property.
In particular, for almost homogeneous random graph models (e.g. Erd\H{o}s-R\'enyi graphs), the condition is satisfied when $p_n=\Omega(n^{-1}(\log n)^{3+\epsilon})$ for any $\epsilon>0$, which almost matches the best possible lower bound $n^{-1}\log n$. 
{We think the potential gap between our result and the lower bound is an artifact of our proof, 
which arises owing to a cumulative effect of the large deviation bound in the maximal inequality and the number of times that the maximal inequality is applied for chaining.  
A more elegant design for the proof is needed to avoid applying the maximal inequalities multiple times if one wants to pursue an optimal result. We leave it as one possible direction for future work.}



Although the framework considered in this paper is rather inclusive, there are also some other possible directions one can go to make it better.  For example, one may incorporate a global parameter into the framework to model environmental factors. For instance, the home-field advantage model \citep{MR3887567} contains a global parameter measuring the strength of home-field advantage which is not contained in the latent score vector $\u$. The distribution of the outcome will be different depending on which subject is at home.


Secondly, the assumption on pairwise comparison data can be generalized to multiple comparison data. For example, the Plackett-Luce model \citep{plackett1975analysis,MR0108411} is the multiple-comparison version of the BT model. Compared to pairwise comparison models, multiple comparison models involve data measuring the interaction between more than two items in a single observation, resulting in the comparison graph being a hypergraph. This may cause difficulty in obtaining the asymptotic properties of the MLE. Particularly, the entry-wise error of the MLE in the multiple comparison models is currently elusive to us.



Finally, it is worth pointing out that although our framework allows great flexibility in terms of selecting the link function, it is generally unknown which choice fits the true model best. This poses the natural question for model selection among different valid parametrizations. 
In Section \ref{han}, we conducted an empirical study towards this direction using existing model selection techniques. However, it is not clear to us which method has the best capability of deciphering the most relevant model (both in theory and in practice), which we leave as a direction for future investigation.

\section*{Acknowledgments}
The authors are very grateful to the Editor Prof. McKeague, the Associate Editor, and two anonymous referees for their very helpful comments which significantly improved the presentation of the paper. 
	The authors also thank Prof. Tom Alberts for going through an early version of the draft, Prof. Fan R. K. Chung for explaining a proposed graph condition in the manuscript, and Prof. Zhigang Bao for helpful discussion. 

\section*{Appendices}

\begin{appendices}

In this section, we summarize the notation that will be used throughout the appendices.
\begin{itemize}
	\item $[n]: =\{1,\ldots,n\}$, where $ n $ is the number of subjects.
	\item $ \u = (u_1,\ldots,u_n)^T$: true score vector of the model, where $ u_i$ is the latent score of subject $ i $ for $ i \in [n]$. 
	\item $ M_n $: the dynamic range of $ \u $, which is defined as $ \max_{i,j\in [n]}|u_i - u_j|. $
	\item $ n_{ij} $: the number of the comparisons between subjects $ i $ and $ j $. Specifically, $ n_i = \sum_{j} n_{ij}.$
	\item $ A $: the set of all possible outcomes.
	\item $ X_{ij}^{(t)} $: the outcome of comparison between $ i $ and $ j $ in their $ t$-th comparison, where $ t \in [n_{ij}] $ and $X_{ij} \in A.  $
	
	\item $ C_n^{(i)} $: some constants which depend only on $ f(x;y) $ and $ M_n $,  for $ i\in [4]. $
	\item $ \theta $: the threshold parameter in the Rao-Kupper model and the Davidson model.
	
	\item $ G = (V, E)$: general graph, where $ V $ stands for vertices and $ E $ stands for edges. Specifically, in the main article, the comparison graph is denoted by $G_n= (V_n, E_n) $ generated by $ \{n_{ij}\}_{i \neq j} $, where the subscript emphasizes the dependence on $n$. 
	
		
	
	\item For any $U\subset V$, its boundary edges are defined as $\partial U = \{(i,j)\in E: i\in U, j\in U^\complement\}$; its volume is defined as $\text{vol}(U)=\sum_{u\in U}\deg(u)$. 
 
	\item  For $U_1, U_2\subset V$, $\EE(U_1, U_2)$  denotes the set of cross edges between $U_1$ and $U_2$, that is, $\EE(U_1, U_2) = \{(i, j)\in E: i\in U_1, j\in U_2\}$.

	\item Ratio cut $h_{G}(U):={|\partial U|}/{\min\{|U|, |U^\complement|\}}$.

	\item Modified Cheeger constant $ 	h_G  := \min_{U\subset V} h_G(U).$

		\item For any $ i,j \in V$, dist$(i, j)$ is the shortest-path distance between $  i $ and $  j $.
		\item  For $i\in V$, $\deg(i)$ denotes the degree of $i$.

\item For any $U\subset V$, its $r$-graph neighborhood is defined as $N(U, r)=\{j\in V: \exists i\in U\ \text{such that}\ \text{dist}(j, i)\leq r\}$; its boundary points are defined as $\delta U=N(U, 1)\setminus U$;  its volume is defined as $\text{vol}(U)=\sum_{u\in U}\deg(u) =\sum_{u\in U}n_u$.

\end{itemize}

\section{Proof of Existence and Uniqueness of the MLE}

In this section, we will demonstrate that both the existence and uniqueness of the MLE hold with probability $ 1 $, which is the combination of Theorem 1 and Lemma 1 in the manuscript.
\subsection{Proof of Lemma \ref{exist1}}\label{eeee}
\begin{proof}[Proof of Lemma \ref{exist1}]	
	Recall the log-likelihood function
	\begin{align*}
		&l(\v) = \sum_{1\leq i < j\leq n} \sum_{t\in [n_{ij}]} \log f(X_{ij}^{(t)};v_i - v_j),
	\end{align*}
	where $\v\in\mathbb{V}=\{\v \in \mathbb{R}^n: v_1 = 0\}$. 	Uniqueness would follow if $l(\v)$ is strictly concave on $\mathbb{V}$. Indeed, this can be verified via direct computation: For any $p\in [0,1]$ and $\v, \w\in\mathbb{V}$, 
	\begin{align*}
		&\ pl(\v)+(1-p)l(\w)\\
		=&\  \sum_{1\leq i < j\leq n} \sum_{t\in [n_{ij}]} \left(p\log f(X_{ij}^{(t)}; v_i- v_j)+(1-p)\log f(X_{ij}^{(t)}; w_i- w_j)\right)\\
		\leq&\  \sum_{1\leq i < j\leq n} \sum_{t\in [n_{ij}]} \log f(X_{ij}^{(t)}; pv_i - pv_j+(1-p)w_i-(1-p)w_j)\\
		=&\ l(p\v+(1-p)\w),
	\end{align*}
	where the inequality follows from the assumption that $f(x; y)$ is log-concave with respect to $y$. This proves that $l(\v)$ is concave on $\mathbb{V}$. The concavity is in fact strict since $f(x; y)$ is strictly log-concave with respect to $y$ and the comparison graph is connected. 
	
	For the existence, we note that under Assumption 4, there exists some $C>1$ depending only on $X^{(t)}_{ij}$ such that 
	\begin{align}
		\max_{1\leq i<j\leq n, t\in [n_{ij}]}\sup_{y\in\R}f(X^{(t)}_{ij}, y)<C. \label{C-bound}
	\end{align}
	Particularly, 
	\begin{align*}
		\sup_{\v\in\mathbb{V}}l(\v)\leq \sup_{\v\in\mathbb{V}}\sum_{1\leq i < j\leq n} \sum_{t\in [n_{ij}]} \log C<\infty. 
	\end{align*}
	We claim that the supremum can be achieved in $\mathbb{V}$. Let $\{\v_k\}_{k\in\N}$ be a maximizing sequence of $l(\v)$ in $\mathbb{V}$ such that $l(\v_k)\leq l(\v_{k+1})$, i.e., one may choose $ \v_k \in \argmax_{\v \in \mathbb{V}_k} l(\v)$  with $ \mathbb{V}_k = \{\v \in \mathbb{V}:\left\lVert \v \right\rVert_\infty\leq k\} $.  
	Our claim would be true if $\{\v_k\}_{k\in\N}$ is contained in some compact subset of $\mathbb{V}$, which can be equivalently stated as,  for $i\in [n]$, $\{(\v_k)_i\}_{k\in\N}$, the restriction of $\{\v_k\}_{k\in\N}$ to the $i$-th component, is uniformly bounded. To see this, consider the following sets of components that potentially diverge to $\infty$:
	\begin{align*}
		&S_+:=\{i\in [n]:  \limsup_{k}(\v_k)_i=+\infty\},&S_-:=\{i\in [n]:  \liminf_{k}(\v_k)_i=-\infty\}.
	\end{align*}
	It suffices to show that $S_+=S_-=\varnothing$. Indeed, if $S_+\neq\varnothing$, consider the partition of $[n]$ as $S_+\cup S^\complement_+$. It is clear that $1\in S^\complement_+\neq\varnothing$. According to Condition 1, for $i\in S_+$ there exists $j\in S^\complement_+$ and $t\in [n_{ij}]$ such that $X^{(t)}_{ij}<0$. By Assumption 3, which states that $\log f(x; y)\rightarrow -\infty$ as $y\rightarrow +\infty$ for $x<0$,  we can find $R>0$ such that,
	\begin{equation*}
		\log f(X^{(t)}_{ij}, y)< l(\v_1)-\sum_{1\leq i < j\leq n} \sum_{t\in [n_{ij}]} \log C, \ y\geq R. 
	\end{equation*} 
	Equivalently, for $\v\in\mathbb{V}$ with $v_i-v_j > R$,
	\begin{align*}
		l(\v)\leq \log f(X^{(t)}_{ij}, R)+\sum_{1\leq i < j\leq n} \sum_{t\in [n_{ij}]} \log C<l(\v_1)\leq l(\v_k), 
	\end{align*}
	where the first inequality follows from \eqref{C-bound}.
	This implies that $(\v_k)_j\geq (\v_k)_i - R$ for $k\in \N$. Since $i\in S_+$, we must also have $j\in S_+$ by definition, which contradicts $j\in S_+^\complement$. Therefore, $S_+=\varnothing$. By a similar argument, $S_-=\varnothing$. $S_+=S_-=\varnothing$ together implies that 
	\begin{align*}
		\max_{i\in [k]}\left\{\left|\limsup_{k\to\infty}(\v_k)_i\right|, \left|\liminf_{k\to\infty}(\v_k)_i\right|\right\}<\infty. 
	\end{align*} 
	Hence, $\left\{\v_k\right\}_{k\in\N}$ is bounded thus admits a convergent subsequence whose limit is in $\mathbb{V}$, proving the existence of the MLE.   
\end{proof}

\subsection{Proof of Theorem \ref{exist}}\label{lkg}
\begin{proof}[Proof of Theorem \ref{exist}]	
Without loss of generality, we assume $p_{ij,n} = p_n$ for all $i, j\in [n]$ and $ T = 1 $, since Condition \ref{Condition: existence} is retained when the comparison rates are increased.
Let $H_n$ denote the event that Condition \ref{Condition: existence} holds. We will show $ \P(H_n^\complement) \leq n^{-2}$ for sufficiently large $n$ if 
\begin{equation}\label{condition in Theorem 1}
	\frac{\log n}{np_n|\log C_n^{(1)}|}\to 0 \ \ \ \text{as~} n \to \infty.
\end{equation}
Since $\sum_{n} n^{-2}<\infty$, our desired result follows by applying the Borel-Cantelli lemma, which implies $\P(\limsup_{n}H_n^\complement	) = 0$.

Intuitively, 	\eqref{condition in Theorem 1} suggests that $C_n^{(1)}$ is away from $1$. Note that $C_n^{(1)}$ roughly measures the chance of the strongest subject in the group beating the weakest one in a single comparison. $C_n^{(1)}$ being away from $1$ simply means that there is a certain probability that the weakest subject in the group will win over the others. If enough comparisons take place,  $H_n$ will hold with high probability. 

To make the idea precise, we recall a few notation. 
For any $S\subset [n]$, $|\partial S|$ is the number of cross edges between $S$ and $S^\complement$, that is, $ |\partial S| = \sum_{i \in S, j\in S^\complement } n_{ij}.$
Conditional on $\{n_{ij}\}_{1\leq i<j\leq n}$, applying a union bound over the partitions of $[n]$ yields
\begin{align*}
	\P\left(H_n^\complement\right)&\leq \sum_{r \in [n-1]}\sum_{S\subset [n]: |S| = r}\P\left(\text{Condition \ref{Condition: existence} fails for the partition}\  (S, S^\complement)\right)\\
	&{\leq}  \sum_{r \in [n-1]}\sum_{S\subset [n]: |S| = r} (C_n^{(1)})^{|\partial S|}.
\end{align*}
where the last inequality follows from that the probability that one subject beats the other is at most $C_n^{(1)}$.
For fixed $S$ with $|S|=r$,  $|\partial S|$ is a sum of $r(n-r)$ i.i.d. Bernoulli random variables with parameter $p_n$. Applying the Chernoff bound \citep{MR0057518} together with a union bound over the partitions of $[n]$ yields that,  for sufficiently large $n$, 
\begin{align}
	&\P\left(|\partial S|\geq \frac{1}{2}|S|(n-|S|)p_n, \forall S\subset [n]\right)\label{deg}\\
	\geq\ & 1-\sum_{r\in [n-1]}\sum_{S\subset [n]: |S| = r} \exp\left\{-\frac{1}{8}r(n-r)p_n\right\}\nonumber\\
	\geq\ & 1-\sum_{r\in [n-1]}{n\choose r}\exp\left\{-\frac{1}{8}r(n-r)p_n\right\}\nonumber\\
	\geq\ &1-\sum_{r\in [n-1]}n^{\min\{n-r, r\}}\exp\left\{-\frac{1}{16}n\min\{n-r, r\}p_n\right\}\nonumber\\
	\geq\ &1-\sum_{r\in [n-1]}\exp\left\{-\min\{n-r, r\}\log n\left(\frac{np_n}{16\log n}-1\right)\right\}\nonumber\\
	\geq\ &1-\exp\left\{-\log n\left(\frac{np_n}{16\log n}-2\right)\right\}\nonumber\\
	\geq\ & 1- n^{-3}\nonumber, 
\end{align}  		
where the last inequality holds if $\log n/np_n \to 0$ as $ n \to \infty $.
Denote the event in \eqref{deg} by $F_n$. Our desired result is obtained immediately from straightforward computation: For sufficiently large $n$, by the total probability formula,  
\begin{align*}
	\P\left(H_n^\complement\right)&=\P\left(H_n^\complement|F_n\right) \P\left(F_n\right) + \P\left(H_n^\complement|F_n^\complement\right)\P\left(F_n^\complement\right)\\
	&\leq \P\left(H_n^\complement|F_n\right) + \P\left(F_n^\complement\right)\\
	&\leq \sum_{r\in [n-1]}{n\choose r}(C_n^{(1)})^{\frac{1}{2}r(n-r)p_n}+n^{-3}\\
	&\leq 2\sum_{r\in [\lfloor n/2\rfloor]}{n\choose r}(C_n^{(1)})^{\frac{1}{4}rnp_n}+n^{-3}\\
	&\leq 2\left(\left(1+(C_n^{(1)})^{\frac{1}{4}np_n}\right)^n-1\right)+n^{-3} \stackrel{\eqref{condition in Theorem 1}}{\leq}\ 2n^{-3}\leq n^{-2}. 
\end{align*}	
\end{proof}

\section{Proof of Theorem \ref{main: general}}\label{section:proof}
Next, we prove the uniform consistency of the MLE in the general comparison model under a sequence of  rapidly expanding (RE) comparison graph. 
\begin{proof}[Proof of Theorem \ref{main: general}]

Since $u_1= \widehat{u}_1 = 0$,  we have $\max_{i\in [n]}(\widehat{u}_i-u_i)\geq 0$, $\min_{i\in [n]}(\widehat{u}_i-u_i)\leq 0$, and 
\begin{align*}
	\|\widehat{\u}-\u\|_\infty=\max\{\max_{i\in [n]}(\widehat{u}_i-u_i), -\min_{i\in [n]}(\widehat{u}_i-u_i)\}\leq \max_{i\in [n]}(\widehat{u}_i-u_i)-\min_{i\in [n]}(\widehat{u}_i-u_i).
\end{align*} 
Thus,  to prove Theorem 4, it suffices to show that for sufficiently large $n$, with probability at least $1-n^{-2}$, 
\begin{align}
	\max_{i\in [n]}(\widehat{u}_i-u_i)-\min_{i\in [n]}(\widehat{u}_i-u_i) \lesssim \Delta_n^{RE}. \label{6:goal}
\end{align} 
Define
\begin{align*}
	&\alpha\in \arg\max_{i\in [n]}(\widehat{u}_i  - u_i) & \beta \in \arg\min_{i\in [n]}(\widehat{u}_i  - u_i).
\end{align*}
We wish to show that $(\hat{u}_\a-u_\a)-(\hat{u}_\b-u_\b)$ is bounded by $\Delta_n^{RE}$ as $n$ tends to infinity with high probability. However, direct evaluation of $(\widehat{u}_\a-u_\a)-(\widehat{u}_\b-u_\b)$ is impossible unless there is a comparison between $\a$ and $\b$, which will become less likely in sparse graphs. Such phenomenon poses a great challenge in analyzing uniform consistency properties of the MLE in the context of sparse comparison networks.
We will address this issue by applying a novel chaining argument. 

Let $ c $ be an absolute constant, $K_{n,1}$  and $K_{n,2}$ be integers depending on $ n $,  $ \{\Delta d_k\}^{K_{n,1}}_{k=1} $ and $ \{\Delta s_k\}^{K_{n,2}}_{k=1} $ be increasing sequences. Consider the following upward-nested sequences of neighborhood at $\a$ and $\b$ respectively based on the estimation error:
\begin{align*}
	&B_k = \left\{j: (\widehat{u}_\alpha - u_\alpha) - (\widehat{u}_j - u_j) \leq \sum_{\ell=0}^{k-1}\Delta d_\ell  \right\} & \Delta d_k = \omega_n\sqrt{\frac{4c\log n }{h_{G_n}(B_k)}}\\
	& C_k = \left\{j: (\widehat{u}_\beta - u_\beta) - (\widehat{u}_j - u_j) \geq -\sum_{\ell=0}^{k-1}\Delta s_\ell  \right\}& \Delta s_k = \omega_n\sqrt{\frac{4c\log n }{h_{G_n}(C_k)}},
\end{align*}
where $ \Delta d_0 = \Delta s_0 = 0 $, $K_{n,1}$ and $K_{n,2}$ as the stopping times
\begin{align*}
	&K_{n,1} = \left\{k: |B_k|>\frac{n}{2}\right\}& K_{n,2} = \left\{k: |C_k|>\frac{n}{2}\right\}.
\end{align*}
We make the following claim:

\begin{claim}\label{myclaim}
	There exists an absolute constant $c$ such that for all sufficiently large $n$, $\{B_k\}^{K_{n,1}}_{k=1}$ and $\{C_k\}^{K_{n,2}}_{k=1}$ constructed above are admissible sequences of $G_n$. 
\end{claim}

We now finish the proof assuming Claim \ref{myclaim} is true. 
Since $|B_{K_{n,1}} |>n/2$ and $|C_{K_{n,2}}|>n/2$, $B_{K_{n,1}}\cap C_{K_{n,2}} \neq\varnothing$. 
Thus, there exists an element $\nu\in B_{K_{n,1}}\cap C_{K_{n,2}}$ to chain $\alpha$ with $\beta$:
\begin{align*}
	(\widehat{u}_\a-u_\a)-(\widehat{u}_\b-u_\b) &= (\widehat{u}_\a-u_\a)-(\widehat{u}_\nu-u_\nu)+(\widehat{u}_\nu-u_\nu)-(\widehat{u}_\b-u_\b)\\
	&\leq \sum_{k=1}^{K_{n,1}-1}\omega_n\sqrt{\frac{4c\log n }{h_{G_n}(B_k)}} + \sum_{k=1}^{K_{n,2}-1}\omega_n\sqrt{\frac{4c\log n }{h_{G_n}(C_k)}}\\
	&\leq 4\sqrt{c}\omega_n\max_{\{A_k\}_{k=1}^K\in \mathscr A(G_n)}\sum_{k=1}^{K-1} \sqrt{\frac{\log n}{h_{G_n}(A_k)}} =  4\sqrt{c}\omega_n\Gamma_{G_n}^{RE},
\end{align*}
where the last inequality follows from the assumption that $\{G_n\}_{n\in\N}$ is RE. 
The proof is complete. 
We end this section by providing the proof of Claim \ref{myclaim}.

\textit{Proof of Claim \ref{myclaim}.}	
We only prove for $\{B_k\}^{K_{n,1}}_{k=1}$ as the case of $\{C_k\}^{K_{n,2}}_{k=1}$ can be shown similarly. Firstly, note for $k<{K_{n,1}}$, 
\begin{align}
	\Delta d_k = \omega_n\sqrt{\frac{4c\log n }{h_{G_n}(B_k)}} \stackrel{|B_k|\leq \frac{n}{2}}{=} \omega_n\sqrt{\frac{4c|B_k|\log n }{|\partial B_k|}}.\label{bks}
\end{align}
Since the comparison graph sequence is RE with rate $\omega_n$, it is AE with the same rate.
Thus, for all sufficiently large $n$, 
\begin{align}
	\max_{k\in[K_{n,1} -1]}\Delta d_k\leq 1.\label{deltad}
\end{align}  
Secondly, observe the symmetry under Assumption 2: $\log f(x;y) = \log f(-x;-y)$. Taking derivative on both sides with respect to $y$ yields
\begin{align}
	g(x;y) + g(-x; -y) = 0.\label{keyobs}
\end{align}
Recall the log-likelihood function can be written as
\begin{equation*}
	l(\v) = \frac{1}{2}\sum_{i\in [n]}\sum_{j\in\delta\{i\}} \sum_{t\in [n_{ij}]} \log f(X_{ij}^{(t)};v_i - v_j).
\end{equation*}
Since $\widehat{\u}$ is the maximizer of $l(\v)$ under constraint $v_1=0$, the first-order optimality condition implies that
\begin{align}
	&\partial_i l(\widehat{\u})=\sum_{j\in\delta\{i\}} \sum_{t\in [n_{ij}]} g(X_{ij}^{(t)}; \widehat{u}_i-\widehat{u}_j)=0& i\in [n]\setminus\{1\}, \label{l1:1}
\end{align}
where $\partial_i l$ denotes the derivative of $l$ with respect to the $i$-th component. 
For fixed $i\in [n]$, we consider
\begin{equation*}
	\partial_i l(\u)=\sum_{j\in\delta\{i\}} \sum_{t\in [n_{ij}]}g(X_{ij}^{(t)}; u_i-u_j).\label{l1:2}
\end{equation*}
It is easy to verify $\E[\partial_i l(\u)]=0$. 
For every fixed $S\subset [n]$ satisfying $ |S| \leq  n/2 $, since $\{g(X_{ij}; u_i-u_j)-\E[g(X_{ij}; u_i-u_j)]\}_{1\leq i<j\leq n}$ is a sequence of independent sub-Gaussian random variables with sub-Gaussian norms bounded by $C_n^{(4)}$, applying Hoeffding's inequality yields that, with probability at least $1-n^{-4|S|}$,
\begin{align}
	\sum_{i\in S}(\partial_i l(\u)-\partial_i l(\widehat{\u})) &\stackrel{\eqref{l1:1}}{=} \sum_{i\in S}(\partial_i l(\u)-\E[\partial_i l(\u)])\label{myeq1}\\
	& = \sum_{i\in S}\sum_{j\in\delta\{i\}} \sum_{t\in [n_{ij}]}\left(g(X_{ij}^{(t)}; u_i-u_j)-\E[g(X_{ij}^{(t)}; u_i-u_j)]\right)\nonumber\\
	& \stackrel{\eqref{keyobs}}{=}\sum_{i\in S}\sum_{j\in\delta\{i\}\cap S^\complement} \sum_{t\in [n_{ij}]}\left(g(X_{ij}^{(t)}; u_i-u_j)-\E[g(X_{ij}^{(t)}; u_i-u_j)]\right)\nonumber\\
	&\leq C_n^{(4)}\sqrt{c|S||\partial S|\log n}\nonumber, 
\end{align} 
where $c$ is an absolute constant.  
A similar estimate of \eqref{myeq1} can be obtained for $B_k$ (which is random) using a union bound:
\begin{align}
	&\P\left(\sum_{i\in B_k}(\partial_i l(\u)-\partial_i l(\widehat{\u}))<C_n^{(4)}\sqrt{c|B_k||\partial B_k|\log n}\right)\label{myeq3}\\
	=&\ 1-\P\left(\sum_{i\in B_k}(\partial_i l(\u)-\partial_i l(\widehat{\u}))\geq C_n^{(4)}\sqrt{c|B_k||\partial B_k|\log n}\right)\nonumber\\
	\geq&\ 1-\sum_{S\subset [n]: |S| \leq  n/2 }\P\left(\sum_{i\in S}(\partial_i l(\u)-\partial_i l(\widehat{\u}))\geq C_n^{(4)}\sqrt{c|S||\partial S|\log n}\right)\nonumber\\
	\stackrel{\eqref{myeq1}}{\geq}&\ 1-\sum_{s\in [\lfloor n/2\rfloor]}\sum_{S\subset [n]: |S| = s}n^{-4s}\nonumber\\
	=&\ 1-\sum_{s\in [\lfloor n/2\rfloor]}{n\choose s}n^{-4s}\nonumber\\
	\geq&\ 1-n^{-2}.\nonumber
\end{align}
For $k<K_{n,1}$, 
\begin{align}
	\sum_{i\in B_k}(\partial_i l(\u)-\partial_i l(\widehat{\u}))& \stackrel{\eqref{keyobs}}{=} \sum_{i\in B_k}\sum_{j\in\delta\{i\}\cap B_k^\complement} \sum_{t\in [n_{ij}]} \underbrace{\left(g(X_{ij}^{(t)}, u_i-u_j)-g(X_{ij}^{(t)}, \widehat{u}_i-\widehat{u}_j)\right)}_{\text{$\geq 0$ since $g$ is nonincreasing in $y$}}\nonumber\\
	&\geq \sum_{i\in B_k}\sum_{j\in\delta\{i\}\cap B_{k+1}^\complement} \sum_{t\in [n_{ij}]}\left(g(X_{ij}^{(t)}, u_i-u_j)-g(X_{ij}^{(t)}, \widehat{u}_i-\widehat{u}_j)\right)\label{myeq2}.
\end{align}
Note for $i\in B_k$ and $j\in B_{k+1}^\complement$,
\begin{align*}
	\widehat{u}_i-\widehat{u}_j - (u_i-u_j) = (\widehat{u}_\alpha-u_\alpha-(\widehat{u}_j-u_j))-(\widehat{u}_\alpha-u_\alpha-(\widehat{u}_i-u_i))\geq \Delta d_k.
\end{align*}
We can further bound \eqref{myeq2} using the mean-value theorem:
\begin{align*}
	\sum_{i\in B_k}(\partial_i l(\u)-\partial_i l(\widehat{\u}))&\geq \sum_{i\in B_k}\sum_{j\in\delta\{i\}\cap B_{k+1}^\complement} \sum_{t\in [n_{ij}]} \left(g(X_{ij}^{(t)}, u_i-u_j)-g(X_{ij}^{(t)}, u_i-u_j + \Delta d_k)\right)\\
	& = \sum_{i\in B_k}\sum_{j\in\delta\{i\}\cap B_{k+1}^\complement} \sum_{t\in [n_{ij}]}|g_2(\zeta_{ij})|\Delta d_k\ \ \ \ \ \ \ \ \zeta_{ij}\in [u_i-u_j, u_i-u_j+\Delta d_k]\\
	&\stackrel{\eqref{deltad}}{\geq} \sum_{i\in B_k}\sum_{j\in\delta\{i\}\cap B_{k+1}^\complement} \sum_{t\in [n_{ij}]}C_n^{(3)}\Delta d_k.
\end{align*}
This combined with \eqref{myeq3} implies, for sufficiently large $n$, with probability at least $1-n^{-2}$,
\begin{align}
	C^{(3)}_n|\EE(B_k, B_{k+1}^\complement)|\Delta d_k&\leq \sum_{i\in B_k}(\partial_i l(\u)-\partial_i l(\widehat{\u}))\leq C_n^{(4)}\sqrt{c|B_k||\partial B_k|\log n}.\label{katy}
\end{align}
Under our choice of $\Delta d_k$ in \eqref{bks}, \eqref{katy} implies
\begin{align*}
	|\EE(B_k, B_{k+1}\setminus B_k)|\geq |\EE(B_k, B_{k+1}^\complement)|\Longrightarrow\frac{|\EE(B_k, B_{k+1}\setminus B_k)|}{|\partial B_k|}\geq\frac{1}{2},
\end{align*}
proving that $\{B_k\}_{k\in [K_{n,1}]}$ is an admissible sequence. 
\end{proof}

\section{Proof of Proposition \ref{myprop} and Theorems \ref{main}-\ref{general}}
\subsection{Proof of Theorems \ref{main}-\ref{general}}\label{proof:ASC}
The proof of Theorems \ref{main}-\ref{general} is straightforward as long as we can show the comparison graph defined in Section \ref{section:setup} of the main manuscript is rapidly expanding with $ \Gamma_{G_n}^{RE} \lesssim  \sqrt{{q_n^2(\log n)^3}/{np_n^3}}$. 
In other words, we need to demonstrate the second statement in Proposition \ref{myprop} when involving constant $ T $, for which we provide rigorous proof below. 

\subsection{Proof of Proposition \ref{myprop}}\label{proof:ASCC}
\begin{proof}[Proof of Proposition \ref{myprop}]
We first demonstrate the second statement is true. The first statement is a special case of the second one. 
Recall that the number of comparisons between $ i $ and $ j $, $ n_{ij}$, satisfies $n_{ij} \sim \text{Bin}(T, p_{ij,n}) $, where $ p_n $ and $ q_n $ are taken as the minimum and maximum comparison rate, respectively, that is, $p_{n} := \min_{i,j\in [n]}p_{ij,n}$ and $q_{n} :=  \max_{i,j\in [n]}p_{ij,n}.$

To prove the RE part, note that a general admissible sequence $\{A_k\}_{k=1}^K$ can be divided into two subsequences: $\{A_k\}_{k=1}^{K_1}$ and $\{A_k\}_{k=K_1+1}^K$ with $K_1 = \max\{k: |A_k|\leq n/2\}$. We find the value of $ K_1 $  and $ K - K_1 $ separately.
Using a similar argument as in \eqref{deg}, it follows that, with probability at least $ 1 - n^{-3} $,
\begin{align}
	\frac{T}{2}|S|(n-|S|)p_n \leq |\partial S|, \text{ for any } S\subset [n]. \label{degS}
\end{align}
Meanwhile,
Hoeffding's inequality combined with a union bound yields that, for sufficiently large $n$, with probability at least $1-n^{-3}$, 
\begin{align}
	\frac{T}{2}np_n\leq\min_{i\in V_n}\deg(i)\leq\max_{i\in V_n}\deg(i)\leq 2Tnq_n. \label{degc}
\end{align}
To prove the RE part, we take any $\{A_k\}_{k=1}^{K_1}\in \mathscr A(G_n)$. Due to the definition of admissible sequences, for $ k \leq K_1, $   
 \begin{equation*}
\EE(A_k, A_{k+1}\setminus A_k) \geq |\partial A_k|/2.
 \end{equation*}
 Combining \eqref{degS} with \eqref{degc}, we have
 \begin{equation*}
2Tnq_n |A_{k+1}\setminus A_k|     \geq \EE(A_k, A_{k+1}\setminus A_k)  \geq |\partial A_k|/2 \geq \frac{T}{4}|A_k|(n-|A_k|)p_n.
 \end{equation*}
Since $|A_k|\leq n/2$,
 \begin{equation*}
	 |A_{k+1}\setminus A_k|     \geq  \frac{p_n}{16q_n}|A_k|\Longrightarrow	\frac{|A_{k+1}|}{|A_k|} \geq 1+\frac{p_n}{16q_n}.
\end{equation*}
Since $|A_1| \geq 1$, 
\begin{align*}
	\left(1+\frac{p_n}{16q_n}\right)^{K_1 - 1}\leq |A_{K_1}|\leq \frac{n}{2}\Longrightarrow K\leq \frac{32q_n\log n}{p_n}.
\end{align*}
On the other hand, if $ k > K_1$, namely, $ |A_k| > n/2, $ we obtain a similar result
 \begin{equation*}
	|A_{k+1}\setminus A_k|     \geq  \frac{p_n}{16q_n}(n-|A_k|)\Longrightarrow\frac{|A_{k+1}^\complement|}{|A_k^\complement|} \leq 1-\frac{p_n}{16q_n}.
\end{equation*}
Since $|A_{K-1}^\complement| \geq 1$ and $ |A_{K_1}| > n/2 $, 
\begin{align*}
|A_{K-1}^\complement| \leq	\left(1-\frac{p_n}{16q_n}\right)^{K - K_1 - 1} |A_{K_1}^\complement| \Longrightarrow K - K_1 \leq \frac{32q_n\log n}{p_n}.
\end{align*}
Meanwhile, \eqref{degS} implies $ h_{G_n}(A_k)\geq Tnp_n/4. $ Thus, 
\begin{align*}
	&\sum_{k=1}^{K-1}\sqrt{\frac{\log n}{h_{G_n}(A_k)}}\lesssim \frac{q_n\log n}{p_n}\sqrt{\frac{\log n}{np_n}}.
\end{align*}
Namely, $ \Gamma_{G_n}^{RE} \lesssim  \sqrt{{q_n^2(\log n)^3}/{np_n^3}}$.

\textit{For the third statement}, we take $ T = 1 $ for convenience. Since we require a lower bound on $ p_n, $ we can demonstrate $ h_{G_n}(A_k)\geq np_n/4 $ via \eqref{deg} and \eqref{degS}. We only need to find an upper bound for $ K $. According to the proof of the second statement, we find that the $ |A_k| $ increases exponentially when  $ |A_k| \leq n/2 $ and $ |A_k^\complement|  $ decreases exponentially when $ |A_k| > n/2 $. 
Since $ |A_k| $ cannot exceed $ n $, $ K \lesssim \log n $ in the homogeneous graphs. 
This idea could be extended to the stochastic block model with the finite number of the blocks, say $ t $ blocks. Specifically, we can separate $ V_n = V^{(1)} \cup \ldots \cup V^{(t)} $ to the vertices set of each block. Similarly, $ A_k = A_k^{(1)} \cup \ldots \cup A_k^{(t)} $. The idea is to show that at least one of $ |A_k^{(1)}|, \ldots, |A_k^{(t)}| $ will behave like the $ |A_k| $ under the homogeneous graph, namely $|A_k^{(j)}|$ increases exponentially when $ |A_k^{(j)}| \leq | V^{(j)}|/2  $ and $| V^{(j)}\setminus A_k^{(j)}|$ decreases  exponentially when $ |A_k^{(j)}| > | V^{(j)}|/2  $ for  $ j \in [t]. $
As a result, we require $ \Omega(\log n) $ steps to make $ A_k^{(j)} $ reach $V^{(j)}$, then $ K \lesssim t\log n. $ Since $ t $ is finite and $ h_{G_n}(A_k)\geq np_n/4 $,  $ \Gamma_{G_n}^{RE} \lesssim \sqrt{{(\log n)^3}/{np_n}}. $

The left part is to show  at least one of $ |A_k^{(1)}|, \ldots, |A_k^{(t)}| $ will behave likes the $ |A_k| $ under the homogeneous graph. For ease of illustration, we consider $ t = 2 $ and  corresponding edge density is
$$ \begin{bmatrix}
	p_{11}       & p_{12}  \\
	p_{12}       & p_{22}  \\
\end{bmatrix}. $$
 The proof for $ t>2 $ is similar but more tedious. 
 Consider an admissible sequence $\{A_k\}_{k\in [K]}$ and fix $k<K$. 
 Suppose that  $ V^{(j)} = n_j $, $ A_k^{(j)} = a_j $ and $ |A_k^{(j+1)}\setminus A_k^{(j)}| = m_j, j = 1, 2.$ 
 To reduce clutter, we omit the index dependence on $k$. 
 It is easy to verify that $ n_1 + n_2 = n. $ 

Without loss of generality, we assume $ 1 \leq a_1 < n_1 $ and $ 1 \leq a_1 < n_2 $. Otherwise, one of $ a_1, a_2 $ is always zero before another reaches its maximum value, say $ a_2 = 0. $ This case degenerates to the Erd\H{o}s-R\'enyi graph since we only focus on the first block. Consequently, $ \Omega{(\log n)} $ steps is needed to make $ a_1 $ increase to $ n_1. $ After that, one more step will make $ a_2 > 0 $ and then at most additional $ \Omega{(\log n)} $ steps are required similarly. Therefore $ K \lesssim \log n. $

Now, we focus on $ 1 \leq a_1 < n_1 $ and $ 1 \leq a_1 < n_2. $ Following the argument in \eqref{deg}, we have
\begin{equation}\label{eq: partial A_k}
|\partial A_k| \geq \frac{1}{2} \left[a_1\{(n_1 - a_1)p_{11} + (n_2-a_2)p_{12}\} + a_2\{(n_1 - a_1)p_{12} + (n_2-a_2)p_{22}\}\right]. 
\end{equation}
Besides that, similar degree concentration yields 
$$ \max_{i\in V^{(1)} }\deg(i)\leq 2(n_1p_{11} + n_2p_{12}),\  \max_{i\in V^{(2)} }\deg(i)\leq 2(n_1p_{12} + n_2p_{22})  .$$
According to the definition of the admissible sequence, 
\begin{equation}\label{eq: admiss}
2m_1(n_1p_{11} + n_2p_{12}) +  2m_2(n_1p_{12} + n_2p_{22}) \geq \vol(A_k^{(j+1)}\setminus A_k^{(j)})\geq \frac{1}{2}|\partial A_k|.
\end{equation}
Combining it with \eqref{eq: partial A_k}, we obtain 
\begin{equation}\label{eq: m}
	 m_1 \geq \frac{1}{64}\min\{a_1, n_1 - a_1\} \text{ or } m_2 \geq \frac{1}{64} \min\{a_2, n_2- a_2\}
\end{equation}
which implies $ K \leq 256\log n. $ Indeed, one can check via a contradiction argument that \eqref{eq: admiss} will violate if \eqref{eq: m} does not hold. 
\end{proof}
\end{appendices}

\spacingset{1.9}
\bibliographystyle{apalike}
\bibliography{sample}

\end{document}